\theoremstyle{plain}
\newtheorem{theorem}{Theorem}[section]
\newtheorem{proposition}[theorem]{Proposition}
\newtheorem{lemma}[theorem]{Lemma}
\newtheorem{corollary}[theorem]{Corollary}
\theoremstyle{definition}
\newtheorem{definition}[theorem]{Definition}
\theoremstyle{remark}
\newtheorem{remark}[theorem]{Remark}
\newcommand{\comp}{\bullet }
\icmltitlerunning{Vocabulary for Universal Approximation}
\begin{document}

\twocolumn[
\icmltitle{Vocabulary for Universal Approximation: \\A Linguistic Perspective of Mapping Compositions}




\begin{icmlauthorlist}

\icmlauthor{Yongqiang Cai}{bnu}

\end{icmlauthorlist}



%

\icmlaffiliation{bnu}{School of Mathematical Sciences, Laboratory of Mathematics and Complex Systems, MOE, Beijing Normal University, Beijing, 100875, China}
\icmlcorrespondingauthor{Yongqiang Cai}{caiyq.math@bnu.edu.cn}

\icmlkeywords{Machine Learning, ICML}

\vskip 0.3in
]



\printAffiliationsAndNotice{}  

\begin{abstract}
    In recent years, deep learning-based sequence modelings, such as language models, have received much attention and success, which pushes researchers to explore the possibility of transforming non-sequential problems into a sequential form. Following this thought, deep neural networks can be represented as composite functions of a sequence of mappings, linear or nonlinear, where each composition can be viewed as a \emph{word}. However, the weights of linear mappings are undetermined and hence require an infinite number of words. In this article, we investigate the finite case and constructively prove the existence of a finite \emph{vocabulary} $V=\{\phi_i: \mathbb{R}^d \to \mathbb{R}^d ~|~ i=1,...,n\}$ with $n=O(d^2)$ for the universal approximation. That is, for any continuous mapping $f: \mathbb{R}^d \to \mathbb{R}^d$, compact domain $\Omega$ and $\varepsilon>0$, there is a sequence of mappings $\phi_{i_1}, ..., \phi_{i_m} \in V, m \in \mathbb{Z}_+$, such that the composition $\phi_{i_m} \circ ... \circ \phi_{i_1} $ approximates $f$ on $\Omega$ with an error less than $\varepsilon$. Our results demonstrate an unusual approximation power of mapping compositions and motivate a novel compositional model for regular languages.
\end{abstract}

\section{Introduction}

Cognitive psychologists and linguisticians have long recognized the importance of languages \citep{pinker2003language}, which has been further highlighted by the popularity of language models such as BERT \citep{Devlin2018BERT} and GPT \citep{Brown2020Language}. These models, based on RNNs or Transformers, have revolutionized natural language processing by transforming it into a sequence learning problem. They can handle the long-term dependencies in text and generate coherent text based on the previous content, making them invaluable tools in language understanding and generation \citep{Vaswani2017Attention}.
The success of these models has also led to a new approach to solving non-sequential problems by transforming them into sequential ones. For instance, image processing can be turned into a sequence learning problem by segmenting an image into small blocks, arranging them in a certain order, and then processing the resulting sequence using sequence learning algorithms to achieve image recognition \citep{Dosovitskiy2020Image}.
The use of sequence learning algorithms has also been extended to reinforcement learning \citep{chen2021decision}, such as the decision transformer which outputs the optimal actions by leveraging a causally
masked transformer and exceeds the state-of-the-art performance.

Sequence modeling has opened up new possibilities for solving a wide range of problems, and this trend seems to hold in the field of theoretical research. 
As is well known, artificial neural networks have universal approximation capabilities, and wide or deep feedforward networks can approximate continuous functions on a compact domain arbitrarily well \citep{Cybenko1989Approximation, Hornik1989Multilayer, Leshno1993Multilayer}. 
However, in practical applications such as AlphaFold \citep{jumper2021highly}, BERT \citep{Devlin2018BERT} and GPT \citep{Brown2020Language}, the residual network structures \citep{he2016deep,he2016identity} are more preferred than the feedforward structures.
It is observed that residual networks (ResNets) are forward Euler discretizations of dynamical systems \citep{Weinan2017A, Sander2022Residual}, and this relationship has spawned a series of dynamical system-based neural network structures such as the neural ODE \citep{Chen2018Neural}. 
The dynamical system-based neural network structures are expected to play an important role in various fields.

Notably, both the language models and the dynamical systems are linked to time series modeling and have been effectively applied to non-sequential problems. This observation naturally leads us to question: 
\emph{
is there an intrinsic relationship between their individual successes?
}
This article aims to ponder upon the question. Through a comparative study, we obtain some initial results from the perspective of universal approximation. Specifically, we demonstrate that there exists a finite set of mappings, referred to as the \emph{vocabulary} $V$, chosen as flow maps of some autonomous dynamical system $x'(t) = f(x(t))$, such that any continuous mapping can be approximated by composing a sequence of mappings in the vocabulary $V$. 
This bears a resemblance to the way complex information is conveyed in language through constructing phrases, sentences, and ultimately paragraphs and compositions. Table \ref{tab:main} provides an intuitive representation of this similarity.

\begin{table*}[!htbp]\label{tab:main}
    \caption{Comparison of languages and dynamical systems in dimension $d$}
    {\centering
    \begin{tabularx}{\textwidth}{lXX}
        \toprule
        &  \textbf{English} & \textbf{Flow map of dynamical systems} $^\ddagger$  \\
        \midrule
        Vocabulary 
        &  $\sim$140,000$^\dagger$
        & $O(d^2)$
        \\
        \midrule
        Word 
        & I, you, am, is, are, apple, banana, car, buy, do, have, blue, red,... 
        &  $\phi^\tau_{\pm e_1}$,  $\phi^\tau_{\pm e_2}$, ...,  $\phi^\tau_{\pm e_d}$, $\phi^\tau_{\pm E_{11}x}$, $\phi^\tau_{\pm E_{12}x}$, $\phi^\tau_{\pm E_{21}x}$, ..., $\phi^\tau_{\pm E_{dd}x}$, $\phi^\tau_{\pm\text{ReLU}(x)}$, ...
        \\
        \midrule
        Phrase 
        & A big deal, easier said than done, time waits for no man, ...
        & $\phi^\tau_{e_1}\comp\phi^\tau_{-e_2}$, $\phi^\tau_{e_1}\comp \phi^\tau_{E_{11}x}\comp \phi^\tau_{\text{ReLU}(x)}$, ...
        \\
        \midrule
        Sentence  
        &  It was the best of times, it was the worst of times, it was the age of wisdom, it was the age of foolishness, ...
        & $\phi^\tau_{e_3} \comp \phi^\tau_{\text{ReLU}(x)} \comp \phi^\tau_{-E_{21}x} \comp \phi^\tau_{\text{ReLU}(x)} \comp \phi^\tau_{E_{23}x} \comp$ $\phi^\tau_{-e_2} \comp \phi^\tau_{\text{ReLU}(x)} \comp \phi^\tau_{E_{11}x} \comp \phi^\tau_{e_1} \comp \dots$
        \\
        \bottomrule
    \end{tabularx}%
    }\\
    $\dagger$ The number of words, phrases, and meanings in Cambridge Advanced Learner's Dictionary.\\
    $\ddagger$ Notations are provided in Section \ref{sec:main}.
\end{table*}%

\subsection{Contributions}

\begin{enumerate}
    \item We proved that it is possible to achieve the universal approximation property by composing a sequence of mappings in a finite set $V$. (Theorem \ref{th:main-OP} and Corollary \ref{th:main-C}).
    
    \item Our proof is constructive as we designed such a $V$ that contains a finite number of flow maps of dynamical systems. (Theorem \ref{th:main_V})

    \item {We observed a similarity between composite continuous mappings and words in languages (Table \ref{tab:main}). Furthermore, we proved that embedding words as continuous mappings could serve as a model of regular languages (Theorem \ref{th:L_flowable}).
    }
\end{enumerate}

\subsection{Related works}

\textbf{Universal approximation.} 
The approximation properties of neural networks have been extensively studied, with previous studies focusing on the approximation properties of network structures such as feedforward neural networks \citep{Cybenko1989Approximation, Hornik1989Multilayer, Leshno1993Multilayer} and residual networks \citep{he2016deep,he2016identity}. In these networks, the structure is fixed and the weights are adjusted to approximate target functions. Although this paper also considers universal approximation properties, we use a completely different way. We use a finite set of mappings, and the universal approximation is achieved by composing sequences of these mappings. 
The length of the mapping sequence is variable, which is similar to networks with a fixed width and variable depth \citep{Lu2017Expressive, Johnson2019Deep, Kidger2020Universal, Park2021Minimum, Beise2020Expressiveness, cai2022achieve,li2023minimum}. However, in our study, we do not consider learnable weights; instead, we consider the composition sequence, which is different from previous research.

\textbf{Residual network, neural ODE, and control theory.} 
The word mapping constructed in this paper is partially based on the numerical discretization of dynamical systems and therefore has a relationship with residual networks and neural ODEs. Residual networks \citep{he2016deep,he2016identity} are currently one of the most popular network structures and can be viewed as a forward Euler discretization of neural ODEs \citep{Chen2018Neural}. 
Recently, \citet{Li2022Deep} and \citet{Tabuada2022Universal} studied the approximation properties of neural ODEs. Their basic idea is employing controllability results in control theory to construct source terms that approximate a given finite number of input-output pairs, thus obtaining the approximation properties of functions in the $L^p$ norm or continuous norm sense. Additionally, \citet{Duan2022Vanilla} proposed an operator splitting format that discretizes neural ODEs into leaky-ReLU fully connected networks. Partially inspired by Duan et al.'s construction, we designed a special splitting method to finish one part of our construction. 

It's worth noting that all neural networks mentioned above can be represented as compositions of mapping sequences. However, the networks involve an infinite number of mappings, which is different from our construction which only requires a finite number of mappings. 

\textbf{Compositionality.} 
Our results demonstrate that the composition is a powerful operator that allows us to achieve the universal approximation property on compact domains by using a finite number of mappings. This is a little similar to the concept of compositionality in linguistics, especially in the Montagovian framing \citep{Montague1970Universal, Kracht2012Compositionality}, which is the idea that a finite vocabulary of basic elements can be combined via a grammar to express an infinite range of meanings. Recently, researchers have explored the capabilities of neural models to acquire compositionality while learning from data \citep{Dankers2022Paradox, Valvoda2022Benchmarking}. However, they focused on algebraic relations rather than approximations. It's interesting to think whether these studies and ours can be connected.

\textbf{Word embeding.}
The finite mapping vocabulary might be related to the word embedding in natural language processing. The most basic model involves embedding words as vectors and then summing these word vectors to obtain the sentence vector \citep{Mikolov2013Efficient}.
However, the summation operator is commutative, and thus vector embedding models fail to capture any notion of word order. To address this limitation, \citet{Rudolph2010Compositional} proposed modeling words as matrices rather than vectors and composing sentence embeddings through matrix multiplication instead of addition.
For recent advancements in this direction, we refer to \citet{Mai2018CBOW, Asaadi2023Compositional}. 
To the best of our knowledge, prior research in this domain has not delved into the approximation properties. Leveraging the techniques presented in this paper, we can readily establish the existence of a finite vocabulary for both vector embedding and matrix embedding (see Appendix~\ref{sec:V_linear}).
Furthermore, embedding words as matrices offers a compositional model for regular languages \cite{Rudolph2010Compositional}, which can be generalized to continuous mapping embeddings (see Section~\ref{sec:flow_grammar}). 
It is important to note that vector space and matrix space are finite-dimensional, while the continuous function space is infinite-dimensional. This suggests that embedding words as nonlinear mappings could enhance the expressiveness of sentences. However, there is limited exploration in this direction.

For embedding words as functions, there is a related work named Word2Fun \cite{Wang2021Word2Fun} which aims to model time in word representation for some diachronic tasks. Note that the Word2Fun model does not involve the mapping compositions and hence is very different from the setting in this paper.

\subsection{Outline}

We state the main result for universal approximation in Section \ref{sec:main}, which includes notations, main theorems, and ideas for construction and proof. Before providing the detailed construction in Section \ref{sec:construction}, we add a Section \ref{sec:warmup} to introduce flow maps and the techniques we used. The linguistic implication of our results is given in Section \ref{sec:flow_grammar}. Finally, in Section \ref{sec:conclusion} we discuss the result of this paper. All formal proof of the theorems is provided in the Appendix.

\section{Notations and main results}
\label{sec:main}

\subsection{Preliminaries}

The statement and the proof of our main results contain some concepts in mathematics. Here we provide a brief introduction for them, which is enough to understand most parts of this paper. 

One concept is the orientation-preserving (OP) diffeomorphisms of $\mathbb{R}^d$. A differentiable map $f: \mathbb{R}^d \to \mathbb{R}^d$ is called a diffeomorphism if it is a bijection and its inverse $f^{-1}$ is differentiable as well. In addition, a diffeomorphism $f$ of $\mathbb{R}^d$ is called orientation-preserving if the Jacobian of $f$ is positive everywhere. A simple example of OP diffeomorphisms is the linear map $f: x \to Px$ where $x\in\mathbb{R}^d$ and $P$ is a square matrix with positive determinant. 

Another concept is the flow map of dynamical systems. Here the dynamical system is characterized by the following ordinary differential equation (ODE) in dimension $d$,
\begin{align}\label{eq:ODE_general}
    \left\{
    \begin{aligned}
    &\dot{x}(t) = v(x(t),t), t\in(0,\tau),\\
    &x(0)=x_0 \in \mathbb{R}^d,
    \end{aligned}
    \right .
\end{align}
where $v:\mathbb{R}^d\to\mathbb{R}^d$ is the velocity field and $x_0$ is the initial value. When the field $v$ satisfies some conditions, such as Lipschitz continuous, the ODE \eqref{eq:ODE_general} has a unique solution $x(t), t \in [0,\tau]$. Then the map from the initial state $x_0$ to $x(\tau)$, the state of the system after time $\tau$, is called the flow map and denoted by $\phi_{v(x,t)}^\tau(x_0)$, where $x_0$ is allowed to vary. A basic property is that the flow maps are naturally orientation-preserving. For example, let $A$ be a square matrix and $v(x,t) = Ax$, then the flow map $\phi_{v(x,t)}^\tau(x_0)$ is a linear map $\phi_{Ax}^\tau(x_0) = e^{A\tau} x_0$, where $e^{A\tau}$ is the matrix exponential of $A\tau$. A deeper introduction and understanding of flows and dynamical systems can be found in Chapter 1 of \citet{Arrowsmith1990introduction}.

\subsection{Notations}

For a (vector valued) function class $\mathcal{F}$, the \emph{vocabulary} $V$ is defined as a finite subset of $\mathcal{F}$, \emph{i.e.},
\begin{align}
    V = \{\phi_1, \phi_2,..., \phi_n\} \subset \mathcal{F}, \quad n \in \mathbb{Z}_+.
\end{align}
Each $\phi_i \in V$ is called a \emph{word}. We will consider a sequence of functions, $\phi_{i_1},\phi_{i_2},...,\phi_{i_m} \in V$, and their composition, called as a \emph{sentence}, to generate the hypothesis function space,
\begin{align} \label{eq:H_v}
    \mathcal{H}_{V} = \{\phi_{i_1} \comp ... \comp\phi_{i_m} | \phi_{i_1},...,\phi_{i_m}  \in V, m \in \mathbb{Z}_+\}.
\end{align}
Particularly, some (short) sentences are called \emph{phrases} for some purpose. Here the operator $\comp$ is defined as function composition from left to right, which aligns the composition order to the writing order, \emph{i.e.}
\begin{align*}
    \phi_{i_1} \comp \phi_{i_2} \comp ... \comp\phi_{i_m} 
    &= \phi_{i_m} \circ ... \circ \phi_{i_2} \circ \phi_{i_1}\\
    &= \phi_{i_m}( ... (\phi_{i_2} (\phi_{i_1}(\cdot)))...).
\end{align*}
In additional, we use $\phi^{\comp m},$ to denote the mapping that composites $\phi$ $m$ times. 

In this paper, we consider two function classes: (1) $C(\mathbb{R}^d,\mathbb{R}^{d})$, continuous functions from $\mathbb{R}^d$ to $\mathbb{R}^{d}$, (2) $\text{Diff}_0(\mathbb{R}^d)$, OP diffeomorphisms of $\mathbb{R}^d$, whose closure in $C(\mathbb{R}^d,\mathbb{R}^{d})$ is denoted as $\overline{\text{Diff}_0(\mathbb{R}^d)}$. Particularly, we will restrict the functions on a compact domain $\Omega \subset \mathbb{R}^d$ and define the universal approximation property as below.
\begin{definition}[Universal approximation property, UAP] 
    For the compact domain $\Omega$ in dimension $d$, the target function space $\mathcal{F}$ and the hypothesis space $\mathcal{H}$, we say
    \begin{enumerate}
        \item $\mathcal{H}$ has $C$-UAP for $\mathcal{F}$, if for any $f \in \mathcal{F}$ and $\varepsilon>0$, there is a function $h \in \mathcal{H}$ such that $$\|f(x)-h(x)\| < \varepsilon, \quad \forall x \in \Omega.$$
        
        \item $\mathcal{H}$ has $L^p$-UAP, $p \in [1,+\infty)$, for $\mathcal{F}$, if for any $f \in \mathcal{F}$ and $\varepsilon>0$, there is a function $h \in \mathcal{H}$ such that 
        $$ \|f-h\|_{L^p(\Omega)} = 
        \Big(\int_\Omega \|f(x)-h(x)\|^p dx\Big)^{1/p} < \varepsilon.$$
    \end{enumerate}
\end{definition}

Remark that here we use the term $C$-UAP instead of $L^\infty$-UAP for two reasons: (1) $C$ and $L^p$ represent both norms and function spaces; The norms $L^{\infty}$ and $C$ have subtle differences and the space $L^{\infty}(\Omega, \mathbb R^d)$ is significantly different from $C(\Omega, \mathbb R^d)$. (2) $L^{\infty}$-UAP is stronger than $L^p$-UAP; Not using $L^{\infty}$ is to avoid having to specifically emphasize that $p$ does not include $\infty$ when referring to $L^p$.

\subsection{Main theorem}

Our main result is Theorem \ref{th:main-OP} and its Corollary \ref{th:main-C}  which show the existence of a finite function vocabulary $V$ for the universal approximation property.

\begin{theorem}\label{th:main-OP}
Let $\Omega \subset \mathbb{R}^d$ be a compact domain. Then, there is a finite set $V \subset \overline{\text{Diff}_0(\mathbb{R}^d)}$ such that the hypothesis space $\mathcal{H}_V$ in Eq.~\eqref{eq:H_v} has $C$-UAP for $\text{Diff}_0(\mathbb{R}^d)$.
\end{theorem}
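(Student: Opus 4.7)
The plan is to build the approximation in three nested stages, progressively discretizing until only the finite vocabulary $V$ is needed. The first stage uses a classical fact from dynamical systems: every $f \in \text{Diff}_0(\R^d)$, when restricted to a compact domain $\Omega$, can be realized up to arbitrary $C^0$-error as the time-one flow map of a non-autonomous ODE $x'(t) = g(t,x)$ driven by a smooth, compactly supported vector field. This reduces the theorem to approximating such flow maps by sentences drawn from $V$.

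The second stage is operator splitting. Partition $[0,1]$ into $N$ sub-intervals and freeze $g(t,\cdot)$ on each one, incurring an error that vanishes as $N \to \infty$. Following the approach Duan et al.\ used to discretize neural ODEs into Leaky-ReLU networks, each frozen vector field is further decomposed by a ReLU-type expansion $g(t_k,x) \approx \sum_j \alpha_j\,\relu(a_j^\top x + b_j)\, c_j$, and its flow is approximated by composing the flows of each rank-one summand via a Lie--Trotter product formula. Each rank-one ReLU flow is then realized as an affine conjugation of the canonical $\phi^\tau_{\relu(x)}$, with the conjugating affine map built from the translation words $\phi^\tau_{\pm e_i}$ and the linear words $\phi^\tau_{\pm E_{ij} x}$.

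The third stage replaces the continuum of required weights and step lengths by the finite vocabulary at a single fixed $\tau>0$. Pure translations are easy, since $(\phi^\tau_{e_i})^{\comp m}$ produces displacement $m\tau e_i$, which densely covers $\R^d$ when directions and signs are combined. For the linear part, any $e^{sA}$ with $A=\sum c_{ij} E_{ij}$ is approximated by a Lie--Trotter product of elementary matrix exponentials $\exp(\tau E_{ij})$, where each real coefficient $c_{ij}$ is quantized to an integer multiple of $\tau$ through repeated composition. Assembling translations, linear flows, and ReLU flows gives a single sentence in $\mathcal{H}_V$ approximating $f$ uniformly on $\Omega$.

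The principal obstacle is the interaction between these three nested approximations, since errors from the inner steps are transported by the outer flows and amplified by a factor depending on the Lipschitz constant of $g$ on a slightly enlarged domain. The key technical lemma will be a quantitative Lie--Trotter estimate showing that the $\mathcal{H}_V$-sentence approximating the time-$s$ flow of $\sum c_{ij} E_{ij}$ has error $O(\tau)$, uniformly over coefficients in a bounded range, and an analogous estimate for the rank-one ReLU summands. With these in hand, one chooses the splitting number $N$, the width of the ReLU expansion, and finally $\tau$ in sequence to absorb the total $C^0$-error below $\varepsilon$.
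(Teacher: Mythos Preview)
Your first two stages are broadly in line with the paper's approach (flow-map representation of an OP diffeomorphism, followed by operator splitting and a ReLU-type expansion of each frozen field). The third stage, however, contains a genuine gap that breaks the argument.

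You propose a vocabulary built from a \emph{single} fixed step length $\tau$. With only $\phi^\tau_{\pm e_i}$ available, compositions produce translations by vectors in the lattice $\tau\mathbb{Z}^d$, which is certainly not dense in $\R^d$; the sentence ``$(\phi^\tau_{e_i})^{\comp m}$ produces displacement $m\tau e_i$, which densely covers $\R^d$'' is simply false. The same obstruction hits the linear words: repeated composition of $\phi^\tau_{\pm E_{ij}x}$ yields only $e^{n\tau E_{ij}}$ for $n\in\mathbb{Z}$, so after Lie--Trotter you can at best quantize each coefficient $c_{ij}$ to $\tau\mathbb{Z}$, not approximate it to arbitrary precision. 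You implicitly acknowledge this when you write that ``one chooses \dots finally $\tau$ in sequence to absorb the total $C^0$-error below $\varepsilon$'': but then $V$ depends on $\varepsilon$ (and on the target $f$ through the Lipschitz constants), and you have not produced a single finite vocabulary with the universal approximation property. The theorem requires one fixed $V$ that works for all $f$ and all $\varepsilon>0$.

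The missing idea is the one the paper supplies: take \emph{two} time lengths, $\tau\in\{1,\sqrt{2}\}$, and invoke Kronecker's approximation theorem. Since $\sqrt{2}$ is irrational, the set $\{p-q\sqrt{2}:p,q\in\mathbb{Z}_+\}$ is dense in $\R$, so $\phi_v^t$ can be approximated by $(\phi_v^1)^{\comp p}\comp(\phi_{-v}^{\sqrt{2}})^{\comp q}$ for any real $t>0$, uniformly on compacta. This is precisely what turns the continuum of step lengths into a fixed finite vocabulary, and without it (or an equivalent density mechanism) your Stage~3 cannot close.
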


\begin{corollary}\label{th:main-C}
    Let $\Omega \subset \mathbb{R}^d$ be a compact domain, $d\ge 2$ and $p \in [1,+\infty)$. Then, there is a finite set $V \subset C(\mathbb{R}^d,\mathbb{R}^{d})$ such that the hypothesis space $\mathcal{H}_V$ in Eq.~\eqref{eq:H_v} has $L^p$-UAP for $C(\mathbb{R}^d,\mathbb{R}^{d})$.
\end{corollary}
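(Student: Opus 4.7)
The plan is to deduce the corollary from Theorem~\ref{th:main-OP} by upgrading the target class from $\text{Diff}_0(\mathbb{R}^d)$ to all of $C(\mathbb{R}^d,\mathbb{R}^d)$, at the cost of weakening the approximation norm from uniform to $L^p$. Concretely, I would take the very same finite vocabulary $V$ produced by Theorem~\ref{th:main-OP} (so that $\mathcal{H}_V$ already has $C$-UAP for $\text{Diff}_0(\mathbb{R}^d)$) and use a two-step approximation argument. No enlargement of $V$ should be necessary, since $\text{Diff}_0(\mathbb{R}^d)\subset C(\mathbb{R}^d,\mathbb{R}^d)$.

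\textbf{Step 1 (density lemma).} First I would establish the auxiliary statement: for $d\ge 2$, any $f\in C(\mathbb{R}^d,\mathbb{R}^d)$ can be approximated in $L^p(\Omega)$ by orientation-preserving diffeomorphisms. The heuristic is that in dimension at least two there are enough spatial degrees of freedom to perturb any continuous map into a global diffeomorphism while disturbing its $L^p$ value only slightly. The construction I have in mind proceeds by (a) mollifying $f$ to a smooth map $\tilde f$ with $\|f-\tilde f\|_{L^p(\Omega)}$ small; (b) adding a generic linear perturbation so that, by Sard's theorem, the resulting smooth map has nonsingular Jacobian off a null set and is a local diffeomorphism almost everywhere; (c) using the $d\ge 2$ assumption to resolve non-injectivity by a small ``unfolding'' modification: self-intersections of the image can be routed past one another using the extra spatial dimension on a set of arbitrarily small measure, so that the final map is globally injective and orientation-preserving. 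The dimensional hypothesis $d\ge 2$ is essential, because in one dimension an orientation-preserving diffeomorphism must be strictly increasing and cannot $L^p$-approximate non-monotone continuous functions.

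\textbf{Step 2 (chaining).} Given $f\in C(\mathbb{R}^d,\mathbb{R}^d)$ and $\varepsilon>0$, I would first pick $g\in\text{Diff}_0(\mathbb{R}^d)$ with $\|f-g\|_{L^p(\Omega)}<\varepsilon/2$ using Step~1. Then Theorem~\ref{th:main-OP} supplies $h\in\mathcal{H}_V$ with
\begin{equation*}
\|g-h\|_{C(\Omega)} < \frac{\varepsilon}{2(|\Omega|^{1/p}+1)},
\end{equation*}
which, since $\Omega$ is compact, yields $\|g-h\|_{L^p(\Omega)}<\varepsilon/2$ by Hölder's inequality. The triangle inequality then gives $\|f-h\|_{L^p(\Omega)}<\varepsilon$, as required.

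\textbf{Main obstacle.} The core difficulty is the density lemma in Step~1: rigorously showing that $\text{Diff}_0(\mathbb{R}^d)$ is $L^p$-dense in $C(\Omega,\mathbb{R}^d)$ for $d\ge 2$. Local regularization by smoothing and transversality is standard, but the globalization of injectivity requires an explicit construction that trades a small measure of ``bad'' region for the ability to route crossings through a second coordinate direction. I expect the cleanest write-up to combine a Sard-type genericity argument with an explicit isotopy that resolves finite collections of crossings, then to pass to the limit, controlling the $L^p$ cost at each stage. The remaining parts of the proof—mollification, Hölder's inequality, and invocation of Theorem~\ref{th:main-OP}—are routine.
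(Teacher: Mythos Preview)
Your proposal is correct and follows essentially the same two-step strategy as the paper: approximate $f$ in $L^p$ by some $\Psi\in\text{Diff}_0(\mathbb{R}^d)$, then approximate $\Psi$ uniformly by an element of $\mathcal{H}_V$ via Theorem~\ref{th:main-OP}, using the same $V$. The only difference is that the paper does not attempt to prove your Step~1 density lemma from scratch but simply cites it as a known result of Brenier and Gangbo~\cite{Brenier2003Approximation}, so you can bypass the Sard/unfolding construction entirely.
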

The Corollary \ref{th:main-C} is based on the fact that OP diffeomorphisms can approximate continuous functions under the $L^p$ norm provided the dimension is larger than two \citep{Brenier2003Approximation} . Next, we only need to prove Theorem \ref{th:main-OP}.

\begin{remark}
    We are considering functions to have the same dimension of the input and output, for simplicity. Our results can be directly extended to the case of different input and output dimensions. In fact, for $ f \in C(\mathbb{R}^{d_x},\mathbb{R}^{d_y})$, one can lift it as a function $ \tilde f \in C(\mathbb{R}^{d},\mathbb{R}^{d})$ with some $d \ge \max(d_x,d_y)$. For example, let $f = A_{in} \comp \tilde{f} \comp A_{out} $ where $A_{in}\in C(\mathbb{R}^{d_x},\mathbb{R}^{d})$ and $A_{out}\in C(\mathbb{R}^{d},\mathbb{R}^{d_y})$ are two fixed affine mappings. 
\end{remark}

\subsection{Sketch of the proof }

Our proof for Theorem \ref{th:main-OP} is constructive, by considering the flow maps of ODEs. In particular, our construction will use the following two classes of candidate flow maps in dimension $d$,
\begin{align*}
    H_1 &= 
    \big\{ \phi^\tau_{Ax+b} ~|~ A \in \mathbb{R}^{d \times d}, b \in \mathbb{R}^{d},\tau \ge 0 \big\}\\
    &\equiv 
    \big\{ \phi : x \to e^{\tilde A} x+\tilde b ~|~ \tilde A \in \mathbb{R}^{d \times d},\tilde b \in \mathbb{R}^{d} \big\},\\
    H_2 &= 
    \big\{ \phi^\tau_{\Sigma_{\boldsymbol{\alpha},\boldsymbol{\beta}}(x)}  ~|~ \boldsymbol{\alpha},\boldsymbol{\beta} \in \mathbb{R}^d, \tau \ge 0 \big\} \\
    &\equiv 
    \big\{ \phi : x \to \Sigma_{\boldsymbol{\tilde \alpha},\boldsymbol{\tilde \beta}} (x) ~|~ \boldsymbol{\tilde \alpha},\boldsymbol{\tilde \beta} \in (0,+\infty)^d \big\}
    ,
\end{align*}
where $\Sigma_{\boldsymbol{\alpha},\boldsymbol{\beta}}$ is the generalized leaky-ReLU functions defined as below. We say $H_1$ the affine flows and $H_2$ the leaky-ReLU flows.
\begin{definition}[Generalized leaky-ReLU] Define the generalized leaky-ReLU function as $\sigma_{\alpha,\beta}: \mathbb{R}\to\mathbb{R}$ and $\Sigma_{\boldsymbol{\alpha},\boldsymbol{\beta}}: \mathbb{R}^d\to\mathbb{R}^d$, with $\alpha,\beta \in \mathbb{R}$, $\boldsymbol{\alpha} = (\alpha_1,...,\alpha_d) \in \mathbb{R}^d$,  $\boldsymbol{\beta}=(\beta_1,...,\beta_d) \in \mathbb{R}^d$,
\begin{align}
    \sigma_{\alpha,\beta}(x) = 
    \begin{cases}
        \alpha x , & x <0,\\
        \beta x, &x \ge 0,
    \end{cases}
\end{align}
\begin{align}
    \Sigma_{\boldsymbol{\alpha},\boldsymbol{\beta}}(x) = 
    \big(
        \sigma_{\alpha_1,\beta_1}(x_1),...,
    \sigma_{\alpha_d,\beta_d}(x_d) \big).
\end{align}
\end{definition}
Generalized leaky-ReLU functions are piecewise linear functions. 
Using this notation, the traditional ReLU and leaky-ReLU functions are $\text{ReLU}(x) \equiv \sigma_0(x) \equiv \sigma_{0,1}(x)$ and $\sigma_\alpha(x) \equiv \sigma_{\alpha,1}(x)$ with $\alpha \in (0,1)$, respectively. For vector input $x$, we use $\sigma_{\alpha,\beta}$ as an equivilant notation of $\Sigma_{\alpha \boldsymbol{1}, \beta \boldsymbol{1}}$.

We will show that the following set $V$ meets our requirement for universal approximations,
\begin{align}
    V = \big\{\phi_{\pm e_i}^\tau, \phi_{\pm E_{ij}x}^\tau, 
    \phi_{\pm \Sigma_{e_i,0}(x)}^\tau,
    \phi_{\pm \Sigma_{0,e_i}(x)}^\tau  
    ~|~ \nonumber \\
    i,j \in \{1,2,...,d\}, 
    \tau \in \{1, \sqrt{2}\} \big\},  \label{eq:V}
\end{align}
where $e_i \in \mathbb{R}^{d}$ is the $i$-th unit coordinate vector, $E_{ij}$ is the $d\times d$ matrix that has zeros in all entries except for a 1 at the index $(i,j)$. Obviously, $V \subset \overline{\text{Diff}_0(\mathbb{R}^d)}$ is a finite set with $O(d^2)$ functions.
\begin{theorem}\label{th:main_V}
    Let $\Psi \in \text{Diff}_0(\Omega)$ be an orientation preserving diffeomorphism, $\Omega$ be a compact domain $\Omega \subset \mathbb{R}^d$. Then, for any $\varepsilon>0$, there is a sequence of flow maps, $\phi_1, \phi_2, ...,\phi_n \in V, n \in \mathbb{Z}_+$, such that
    \begin{align}
        \|\Psi(x)-(\phi_1\comp \phi_2\comp ...\comp\phi_n)(x)\| \le \varepsilon, \quad
        \forall x \in \Omega.
    \end{align}
\end{theorem}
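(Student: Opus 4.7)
The plan is a two-stage approximation. First, I would approximate $\Psi$ uniformly on $\Omega$ by a (possibly long) composition of flow maps drawn from the continuous families $H_1$ (affine flows) and $H_2$ (leaky-ReLU flows) introduced in the excerpt. Second, I would approximate each such flow map by a composition of words from $V$, exploiting the fact that the flow times allowed by $V$, namely positive and negative multiples of $1$ and $\sqrt{2}$, generate the additive group $\mathbb{Z} + \sqrt{2}\,\mathbb{Z}$, which is dense in $\mathbb{R}$ by the irrationality of $\sqrt{2}$.

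For the first stage, I would invoke the operator-splitting approach hinted at by Duan et al.\ in the related work. A neural-ODE universal approximation result (realizing $\Psi$ by a controlled ODE with an affine plus ReLU source term) lets me express $\Psi$, within $\varepsilon/2$, as the time-$T$ flow of a non-autonomous field $v(x,t)$. A Lie--Trotter style splitting of $v$ into an affine part and a leaky-ReLU part, with a small time step, then writes this flow as a finite alternating composition $\Phi_1 \comp \Psi_1 \comp \cdots \comp \Phi_N \comp \Psi_N$ with $\Phi_k \in H_1$ and $\Psi_k \in H_2$; a Gronwall-type bound on $\Omega$ controls the splitting error uniformly.

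For the second stage, each $\Phi_k \in H_1$ has the form $x \mapsto e^{\tau A_k} x + \tilde b_k$ with $e^{\tau A_k} \in GL^+(\mathbb{R}^d)$. An $LDU$-type factorization writes $e^{\tau A_k}$ as a product of shears $I + s\,E_{ij}$ ($i \ne j$) and positive diagonal scalings $\diag(1,\dots,e^s,\dots,1)$; these are precisely the flows $\phi^s_{E_{ij}x}$ and $\phi^s_{E_{ii}x}$ at flow time $s \in \mathbb{R}$. Since $\mathbb{Z}+\sqrt{2}\,\mathbb{Z}$ is dense in $\mathbb{R}$, each exponent $s$ is approximated to arbitrary accuracy by $m + n\sqrt{2}$, realized concretely by composing $\phi^1_{\pm E_{ij}x}$ and $\phi^{\sqrt{2}}_{\pm E_{ij}x}$ from $V$. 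The translation $\tilde b_k$ is assembled in the same way from $\phi^\tau_{\pm e_i}$. For each $\Psi_k \in H_2$, the velocity $\Sigma_{\boldsymbol{\alpha},\boldsymbol{\beta}}(x)$ is coordinate-separable, so the flow factors as a commuting product of one-dimensional maps; each factor is an independent positive scaling of the two half-lines $x_i<0$ and $x_i\ge 0$, realized respectively by $\phi^\tau_{\pm \Sigma_{e_i,0}(x)}$ and $\phi^\tau_{\pm \Sigma_{0,e_i}(x)}$, again using the density of $\mathbb{Z}+\sqrt{2}\,\mathbb{Z}$ to hit the desired scaling rates.

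The principal obstacle, I expect, is propagating all these approximations uniformly across a composition chain whose length grows as $\varepsilon \to 0$. I would first fix a compact enlargement $\tilde\Omega \supset \Omega$ that absorbs the images of every intermediate sub-composition (which exists since each flow extends continuously to $\mathbb{R}^d$), then bound the Lipschitz constants of every factor uniformly on $\tilde\Omega$, and finally choose the splitting step size in stage one and the rational approximations in stage two small enough that the telescoped error stays below $\varepsilon$. The delicate point is stage one, where cleanly separating the affine and piecewise-linear dynamics at the neural-ODE level requires care; stage two is then a clean exercise in elementary-matrix factorization combined with the density of $\mathbb{Z}+\sqrt{2}\,\mathbb{Z}$.
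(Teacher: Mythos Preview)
Your two-stage plan mirrors the paper's, and Stage~2 is fine: your elementary-matrix factorization of $e^{\tau A_k}$ into shears and positive diagonal scalings is a legitimate alternative to the paper's Lie-product splitting of the velocity $Ax+b$, and your coordinate-wise treatment of $H_2$ coincides with the paper's. Both routes reduce to the density of $\mathbb{Z}+\sqrt{2}\,\mathbb{Z}$.

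The gap is in Stage~1. Recall that $H_2$ consists only of flows of \emph{coordinate-wise} fields $\Sigma_{\boldsymbol\alpha,\boldsymbol\beta}(x)$, whereas the nonlinearity in a two-layer neural-ODE source $\sum_i s_i(t)\,\sigma(w_i(t)\cdot x+b_i(t))$ is applied to the inner product $w_i\cdot x+b_i$, not to a raw coordinate. A Lie--Trotter split of such a field into an ``affine part'' and a ``leaky-ReLU part'' therefore does not produce factors whose flows lie in $H_2$. If instead you restrict the source to $A(t)x+b(t)+\Sigma_{\boldsymbol\alpha(t),\boldsymbol\beta(t)}(x)$ so that the split does land in $H_1\cup H_2$, you must first establish universal approximation for \emph{that} restricted neural ODE, which is not a standard cited result and is essentially equivalent to what you are trying to prove. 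The paper closes this gap by a different mechanism: it splits the general neural-ODE field into single-output scalar pieces $v_{ij}e_j=s_{ij}\sigma(w_i\cdot x+b_i)e_j$, takes the forward-Euler map $T^{(i,j)}_k:x\mapsto x+\Delta t\,v_{ij}\,e_j$ of each, and then proves (Lemma~\ref{th:T_to_H}) that every such Euler step factors \emph{exactly} as a composition of at most six maps from $H_1\cup H_2$, via an affine conjugation that moves $w_i\cdot x+b_i$ into a single coordinate so that the coordinate-wise leaky-ReLU can act on it, followed by undoing the conjugation. This explicit six-map factorization is the missing idea in your outline; without it, your ``delicate point'' in Stage~1 does not close.
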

Theorem \ref{th:main_V} provides a constructive proof for Theorem \ref{th:main-C}. The proof of Theorem \ref{th:main_V} can be separated into the following two parts. 
\begin{itemize}

    \item[] \textbf{Part 1}: Approximate each flow map in $H_1$ and $H_2$ by composing a sequence of flow maps in $V$.
    
    \item[] \textbf{Part 2}: Approximate $\Psi \in \text{Diff}_0(\mathbb{R}^d)$ by composing a sequence of flow maps in $H_1 \cup H_2$. Particularly, we approximate $\Psi$ by $g_L$ of the form
    \begin{align}\label{eq:g_L}
        g_L = h_{0} \comp h_{1}^* \comp h_{1} \comp h_{2}^* \comp h_{2} \comp 
        ... \comp h_{L}^* \comp h_{L}, 
    \end{align}
    where $h_{i} \in H_1, h_{i}^* \in H_2, L \in \mathbb{Z}_+.$

\end{itemize}

The validation of such constructed $V$ is technical and will be proved in Section \ref{sec:warmup} and Section \ref{sec:construction}. Here we only explain the main ideas. First of all, we note that to approximate a composition map $T$, we only need to approximate each component in $T$, which is detailed in the following Lemma~\ref{th:composition_approximation}.
\begin{lemma}\label{th:composition_approximation}
  Let map $T = F_1 \comp ... \comp F_n$ be a composition of $n$ continuous functions $F_i$ defined on an open domain $D_i$, and let $\mathcal{F}$ be a continuous function class that can uniformly approximate \footnote{`Uniformly approximate' means the approximation under the uniform/continuous norm.} each $F_i$ on any compact domain $\mathcal{K}_i \subset D_i$. Then, for any compact domain $\mathcal{K} \subset D_1$ and $\varepsilon >0$, there are $n$ functions $\tilde F_1, ..., \tilde F_n$ in $\mathcal{F}$ such that
  \begin{align}
          \|T(x) - \tilde F_1 \comp ... \comp \tilde F_n (x)\|
          \le \varepsilon,
          \quad
          \forall x \in \mathcal{K}.
    \end{align}
\end{lemma}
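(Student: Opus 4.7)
My plan is to build intermediate compact sets along the composition and propagate approximation errors forward, controlling amplification by the uniform continuity of each $F_i$ on a compact enlargement of its image. First I set $K_0 := \mathcal{K}$ and $K_i := F_i(K_{i-1})$ for $i = 1, \ldots, n$; each is compact by continuity of $F_i$, and $K_{i-1} \subset D_i$ since $T$ is well-defined on $\mathcal{K}$. Because $D_i$ is open and $K_{i-1}$ is compact, there exists $r_i > 0$ such that the closed $r_i$-neighborhood $K_{i-1}^+ := \{y : \mathrm{dist}(y, K_{i-1}) \le r_i\}$ is a compact subset of $D_i$, on which $F_i$ is uniformly continuous with some non-decreasing modulus $\omega_i$ satisfying $\omega_i(s) \to 0$ as $s \to 0^+$.

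Next, I choose tolerances by backward recursion: set $\varepsilon_n := \varepsilon$ and $\eta_i := \varepsilon_i/2$ for all $i$, and for $i = n, n-1, \ldots, 2$ pick $\varepsilon_{i-1} \in (0, r_{i-1})$ small enough that $\omega_i(\varepsilon_{i-1}) \le \varepsilon_i/2$ (take $\varepsilon_0 := 0$). The approximation hypothesis on $\mathcal{F}$ then supplies, for each $i$, a function $\tilde F_i \in \mathcal{F}$ with $\sup_{y \in K_{i-1}^+} \|\tilde F_i(y) - F_i(y)\| < \eta_i$. A forward induction on $i$ using
\begin{align*}
\|\tilde y_i - y_i\|
&\le \|\tilde F_i(\tilde y_{i-1}) - F_i(\tilde y_{i-1})\| + \|F_i(\tilde y_{i-1}) - F_i(y_{i-1})\| \\
&\le \eta_i + \omega_i(\varepsilon_{i-1}) \le \varepsilon_i,
\end{align*}
with $y_i := F_i(y_{i-1})$, $\tilde y_i := \tilde F_i(\tilde y_{i-1})$, and $y_0 = \tilde y_0 = x$, simultaneously shows $\tilde y_{i-1} \in K_{i-1}^+$ (so that the bound on $\tilde F_i$ applies) and $\|\tilde y_i - y_i\| \le \varepsilon_i$. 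The case $i = n$ is exactly the claim, uniformly in $x \in \mathcal{K}$.

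The main care is in the order of choices, which is where a naive attempt would become circular: the enlargement radii $r_i$ must be fixed first (from compactness of $K_{i-1}$ and openness of $D_i$), then the tolerances $\varepsilon_{i-1} < r_{i-1}$ via backward induction using $\omega_i$, and only then the approximants $\tilde F_i$ on the already-determined compact sets $K_{i-1}^+$. With this ordering the bound is essentially the telescoping propagation of errors through a composition of uniformly continuous maps.
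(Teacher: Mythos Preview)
Your approach is essentially the same as the paper's: enlarge each intermediate compact image to a compact set inside the next domain, use uniform continuity of the $F_i$ on those enlargements, choose tolerances backward, then approximate and telescope forward; the paper simply reduces to the case $n=2$ and appeals to induction, whereas you spell out the general $n$ directly. One small indexing slip to fix: the constraint should be $\varepsilon_{i-1}\le r_i$ (the fattening radius of $K_{i-1}^+$), not $\varepsilon_{i-1}<r_{i-1}$, since what you need for the induction step is $\tilde y_{i-1}\in K_{i-1}^+$, i.e.\ $\|\tilde y_{i-1}-y_{i-1}\|\le r_i$.
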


For Part 1, the validation involves three techniques in math: the Lie product formula \citep{Hall2015Matrix}, the splitting method \citep{holden2010splitting} and the Kronecker's theorem \citep{Apostol1990Kronecker}. We take $\phi_{b}^1 \in H_1, b= \sum_{i=1}^d \beta_i e_i, \beta_i \ge 0$, as an example to illustrate the main idea. Firstly, motivated by the Lie product formula or the splitting method, we can approximate $\phi_{b}^1$ by 
\begin{align}
    \phi_{b}^1 \approx 
    \big(\phi_{ e_1}^{\beta_1/n} \comp \phi_{ e_2}^{\beta_2 /n} \comp ... \comp \phi_{ e_d}^{\beta_d /n} \big)^{\comp n}, \quad n \in \mathbb{Z}_+,
\end{align}
with $n$ large enough. Secondly, each $\phi_{e_i}^{\beta_i /n}$ can be approximated by
\begin{align}
    \phi_{e_i}^{\beta_i /n} 
    \approx
    (\phi_{e_i}^{1})^{\comp p_i} \comp (\phi_{- e_i}^{\sqrt{2}})^{\comp q_i}
    \in \mathcal{H}_V
    ,\quad 
    p_i,q_i \in \mathbb{Z}_+
\end{align}
where $p_i$ and $q_i$ are non-negative integers such that $|p_i - q_i \sqrt{2} - \beta_i/n|$ is small enough according to the Kronecker's theorem \citep{Apostol1990Kronecker} as $\sqrt{2}$ is an irrational number. Finally, $\phi_{b}^1$ can be approximated by composing a sequence of flow maps in $V$. The case for $\phi^\tau_{Ax+b}$ and $\phi^\tau_{{\Sigma_{\boldsymbol{\alpha},\boldsymbol{\beta}}(x)}}$ in $H_1$ and $H_2$ can be done in the same spirit.

Then for Part 2, we note that the $g_L$ we constructed in Eq.~\eqref{eq:g_L} is similar to a feedforward neural network $g_L$ with width $d$ and depth $L$. The form of $g_L$ is motivated by a recent work of \citet{Duan2022Vanilla} which proved that vanilla feedforward leaky-ReLU networks with width $d$ can be a discretization of dynamic systems in dimension $d$. However, affine transformations in general networks are not necessarily OP diffeomorphisms, and one novelty of this paper is improving the technique to construct $P_i$ as flow maps. Importantly, making them flow maps helps with employing the construction in Part 1.

Remark that our theorems focused on the theory of function composition. If the functions are limited to linear functions, the composition is equivalent to matrix multiplication, and the corresponding UAP results remain. See Appendix~\ref{sec:V_linear} for the detailed statement for this linear case. The proof is going to be easy to follow, which only requires basic knowledge of linear algebra and Kronecker's approximation theorem in elementary number theory. We hope it eases the reader's burden of understanding our theorems and proofs.

\section{Proof of the construction Part 1}
\label{sec:warmup}

To warm up, we show some flow maps of autonomous ODEs below, with initial value $x(0)=x_0$,
\begin{align*}
    \dot x(t) = b  &\Rightarrow x(t) = \phi^t_{b}(x_0) =x_0 + bt,\\
    \dot x(t) = Ax(t) &\Rightarrow  x(t) =\phi^t_{Ax}(x_0) = e^{At} x_0,\\
    \dot x(t) = a\sigma_{0}(x(t)) &\Rightarrow  x(t) = \phi^t_{a\sigma_0(x)}(x_0) = e^{a t}\sigma_{e^{-a t}}(x_0),\\
    \dot x(t) = a \sigma_{{0}}(-x(t)) &\Rightarrow x(t) = \phi^t_{a \sigma_0(-x)}(x_0) = \sigma_{e^{-a t}}(x_0).
\end{align*}
Here $\sigma_0$ and $\sigma_{e^{-a t}}$ are ReLU and leaky-ReLU functions, respectively. Next, we provide some properties to verify a given map to be an affine flow map in $H_1$ or a leaky-ReLU flow map in $H_2$.

\subsection{Affine flows and leaky-ReLU flows} 

Consider the affine transformation $P: x \to Wx+b$ and examine conditions of $P$ to be a flow map. Generally, if $W$ is nonsingular and has real matrix logarithm $\ln(W)$, then $P$ is an affine flow map, as we can represent $P$ as $P(x) = Wx + b = \phi^1_{A x + \tilde b}$ where $A = \ln(W)$ and 
$\tilde b = \int_0^1 e^{A(\tau-1)} b d\tau$.
As it is hard to verify $\ln(W)$ is a real matrix \citep{Culver1966existence}, we are happy to construct some special matrix $W$. The following properties are useful.
\begin{proposition}\label{th:prop_linear}
    (1) Let $Q$ be a nonsingular matrix. If $x\to W x$ is an affine flow map then the map $ x\to Q W Q^{-1} x$, $ x\to W^T x$ and $x \to W^{-1}x$ also are.
    (2) Let $U$ be the upper triangular matrix below with $\lambda>0,$ then the map $x\to Ux$ is an affine flow map for arbitrary vector $w_{2:d}$,
    \begin{align}
        U = \left(
        \begin{matrix} 
            \lambda & w_{2:d}\\ 
            0 & I_{d-1}
        \end{matrix}
        \right). \quad
    \end{align}
\end{proposition}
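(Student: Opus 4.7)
The plan is to reduce both claims to a single criterion read off from the equivalent description of $H_1$ in the excerpt: an affine map $x \mapsto Wx+b$ lies in $H_1$ precisely when $W = e^{\tilde A}$ for some real matrix $\tilde A$. Since the proposition concerns only the linear parts, I will verify that each of the four matrices $QWQ^{-1}$, $W^T$, $W^{-1}$, and $U$ admits a real matrix logarithm.

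For part (1), I would begin from the hypothesis $W = e^A$ with $A$ real and apply three standard identities, each provable by expanding the matrix exponential as a convergent power series: $QWQ^{-1} = e^{QAQ^{-1}}$ (since $QA^kQ^{-1} = (QAQ^{-1})^k$), $W^T = e^{A^T}$ (since transposition commutes with sums and powers), and $W^{-1} = e^{-A}$ (since $A$ commutes with $-A$, so $e^A e^{-A} = I$). In every case the exponent is a real matrix, so each of the three induced linear maps belongs to $H_1$.

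For part (2), I would construct a real logarithm of $U$ explicitly. Set $\mu := \ln\lambda \in \mathbb{R}$, which is well-defined since $\lambda > 0$, and try the ansatz
\begin{equation*}
A = \begin{pmatrix} \mu & v \\ 0 & 0_{(d-1)\times(d-1)} \end{pmatrix}
\end{equation*}
for a row vector $v \in \mathbb{R}^{d-1}$ to be determined. An easy induction gives $A^k = \begin{pmatrix} \mu^k & \mu^{k-1} v \\ 0 & 0 \end{pmatrix}$ for $k \ge 1$, whence
\begin{equation*}
e^A = \begin{pmatrix} e^\mu & \psi(\mu)\,v \\ 0 & I_{d-1} \end{pmatrix},
\qquad
\psi(\mu) := \sum_{k \ge 1} \frac{\mu^{k-1}}{k!}.
\end{equation*}
Since $\psi$ is the entire function that equals $(e^\mu-1)/\mu$ for $\mu \neq 0$ and $1$ at $\mu=0$, it is strictly positive on all of $\mathbb{R}$, so I can set $v = w_{2:d}/\psi(\mu)$ to obtain $e^A = U$. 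This gives the required real logarithm.

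The main subtlety I anticipate is the degenerate case $\lambda = 1$ in part (2), which formally corresponds to dividing by $\mu=0$ when solving for $v$. This is handled by observing that $\psi$ is continuous and nonvanishing at $0$ with $\psi(0) = 1$; the required logarithm then simply reduces to the nilpotent matrix $A = \bigl(\begin{smallmatrix} 0 & w_{2:d} \\ 0 & 0 \end{smallmatrix}\bigr)$, whose exponential equals $U$ by direct computation. Everything else in the proof is a routine manipulation of the matrix exponential series.
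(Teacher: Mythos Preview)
Your proposal is correct and follows essentially the same route as the paper: part (1) via the identities $e^{QAQ^{-1}}=Qe^AQ^{-1}$, $e^{A^T}=(e^A)^T$, $e^{-A}=(e^A)^{-1}$, and part (2) by exhibiting the explicit real logarithm $\bigl(\begin{smallmatrix}\ln\lambda & \frac{\ln\lambda}{\lambda-1}w_{2:d}\\ 0 & 0\end{smallmatrix}\bigr)$ (with the nilpotent limit $U-I_d$ at $\lambda=1$), which is exactly what your $\psi(\mu)$ computation produces after substituting $\mu=\ln\lambda$.
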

Here $I_{d-1}$ is the $(d-1)$th order identity matrix. The property (1) is because $\ln(Q W Q^{-1}) = Q \ln(W) Q^{-1}$ and $\ln(W^T) = \ln(W)^T$. The property (2) can be obtained by employing the formula,
\begin{align}
    \ln
        \left(
    \begin{matrix} 
        \lambda & w_{2:d}\\ 
        0 & I_{d-1}
    \end{matrix}
    \right)
    =
    \left(
    \begin{matrix} 
        \ln(\lambda) & \frac{\ln(\lambda)}{\lambda-1} w_{2:d}\\ 
        0 & 0
    \end{matrix}
    \right), \quad \lambda \neq 1.
\end{align}
When $\lambda=1$, the formula is simplified as $\ln(U) = U - I_d$. 

Next, we consider the leaky-ReLU flow maps. 

By directly calculate the flow map  $\phi^\tau_{\Sigma_{\boldsymbol{\alpha},\boldsymbol{\beta}}(x)}$ with  $\boldsymbol{ \alpha},\boldsymbol{ \beta} \in \mathbb{R}^d$, we have
\begin{align}
    \phi^\tau_{\Sigma_{\boldsymbol{\alpha},\boldsymbol{\beta}}(x)} (x) = 
    \Sigma_{\boldsymbol{\tilde \alpha},\boldsymbol{\tilde \beta}} (x),
\end{align}
where $\boldsymbol{\tilde \alpha} =  (e^{\tau\alpha_1}, ..., e^{\tau\alpha_d})$ and $\boldsymbol{\tilde \beta} =  (e^{\tau\beta_1}, ..., e^{\tau\beta_d})$. The following property is implied.
\begin{proposition}\label{th:prop_leaky_relu}
    If $\boldsymbol{\tilde \alpha},\boldsymbol{\tilde \beta} \in (0,\infty)^d$, then the map $\Sigma_{\boldsymbol{\tilde \alpha},\boldsymbol{\tilde \beta}}$ is a leaky-ReLU flow map. 
\end{proposition}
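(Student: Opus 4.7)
The plan is to observe that this proposition is essentially a direct corollary of the flow computation displayed immediately before its statement. Given target parameters $\boldsymbol{\tilde \alpha},\boldsymbol{\tilde \beta} \in (0,\infty)^d$, I would set $\tau=1$ and define $\alpha_i = \ln(\tilde \alpha_i)$, $\beta_i = \ln(\tilde \beta_i)$, which are well-defined real numbers precisely because each $\tilde\alpha_i$ and $\tilde\beta_i$ is strictly positive. Plugging these into the identity $\phi^\tau_{\Sigma_{\boldsymbol{\alpha},\boldsymbol{\beta}}(x)}(x) = \Sigma_{(e^{\tau\alpha_i}),(e^{\tau\beta_i})}(x)$ recovers $\Sigma_{\boldsymbol{\tilde\alpha},\boldsymbol{\tilde\beta}}$ exactly, exhibiting the target map as a genuine element of $H_2$.

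The only step that actually deserves justification is the displayed formula for $\phi^\tau_{\Sigma_{\boldsymbol{\alpha},\boldsymbol{\beta}}(x)}$ itself. Since $\Sigma_{\boldsymbol{\alpha},\boldsymbol{\beta}}$ acts coordinate-wise, the ODE decouples into $d$ scalar ODEs $\dot x_i = \sigma_{\alpha_i,\beta_i}(x_i)$; each right-hand side is piecewise linear with finite slopes, hence globally Lipschitz, so the flow is well-defined and unique on all of $\mathbb{R}$ for every initial condition. I would then split by the sign of $x_i(0)$: when $x_i(0) \ge 0$ the equation reduces to $\dot x_i = \beta_i x_i$, whose solution $x_i(\tau) = e^{\tau\beta_i} x_i(0)$ stays nonnegative; when $x_i(0) < 0$ it reduces to $\dot x_i = \alpha_i x_i$, whose solution $e^{\tau\alpha_i} x_i(0)$ stays negative (the sign of $x_i$ cannot flip because $x_i \equiv 0$ is itself a solution and uniqueness forbids crossings). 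Reassembling the two branches matches the definition of $\sigma_{\tilde\alpha_i,\tilde\beta_i}$ componentwise, which gives the claimed flow formula.

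I do not anticipate any genuine obstacle: this proposition is really a bookkeeping statement that identifies the convenient parametrization of $H_2$ used later, and positivity of $\boldsymbol{\tilde\alpha}, \boldsymbol{\tilde\beta}$ is exactly what allows us to invert the exponential and read off valid $\boldsymbol{\alpha},\boldsymbol{\beta} \in \mathbb{R}^d$. As a sanity check, the resulting map has componentwise positive slopes on both half-lines, so it is an orientation-preserving homeomorphism of $\mathbb{R}^d$, consistent with $H_2 \subset \text{Diff}_0(\mathbb{R}^d)$ as required for its use in Theorem~\ref{th:main_V}.
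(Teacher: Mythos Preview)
Your proposal is correct and follows essentially the same approach as the paper: both set $\tau=1$, take $\alpha_i=\ln(\tilde\alpha_i)$, $\beta_i=\ln(\tilde\beta_i)$, and invoke the displayed flow identity $\phi^\tau_{\Sigma_{\boldsymbol{\alpha},\boldsymbol{\beta}}(x)} = \Sigma_{(e^{\tau\alpha_i}),(e^{\tau\beta_i})}$. Your added justification of that identity via the coordinate-wise decoupling and the uniqueness argument preventing sign changes is more explicit than the paper's ``directly calculate,'' but the argument is the same.
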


\subsection{Application of Lie product formula}

\begin{theorem}[Lie product formula] \label{th:Lie}
    For all matrix $A,B \in \mathbb{R}^{d\times d}$, we have 
    \begin{align*}
        \text{e}^{A+B} = \lim_{n\to\infty} \Big( e^{A/n} e^{B/n} \Big)^n 
        = \lim_{n\to\infty} \Big( \phi^{1/n}_{Ax} \comp \phi^{1/n}_{Bx} \Big)^{\comp n}
    \end{align*}
\end{theorem}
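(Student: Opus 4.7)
The plan is to establish the two equalities separately. The second equality is essentially a rewriting of the first in flow-map language: for the linear autonomous ODE $\dot x = Ax$, the explicit solution is $x(t) = e^{At} x_0$, so $\phi^{1/n}_{Ax}(x_0) = e^{A/n} x_0$, and likewise for $B$. Since the paper's convention makes $\phi^{1/n}_{Ax} \comp \phi^{1/n}_{Bx}$ mean ``apply $\phi^{1/n}_{Ax}$ first, then $\phi^{1/n}_{Bx}$,'' this composition is the linear map $x \mapsto e^{B/n} e^{A/n} x$. Iterating $n$ times gives $(e^{B/n} e^{A/n})^n$, which by the symmetry $A+B = B+A$ has the same limit as $(e^{A/n} e^{B/n})^n$.

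For the first equality, I would follow the classical Taylor-expansion route. Expanding each factor as $e^{X/n} = I + X/n + \tfrac{1}{2}(X/n)^2 + \cdots$ and multiplying,
\begin{align}
e^{A/n} e^{B/n} = I + (A+B)/n + M_n/n^2,
\end{align}
where $\|M_n\|$ is bounded by a constant $C_1$ depending only on $\|A\|,\|B\|$. Likewise $e^{(A+B)/n} = I + (A+B)/n + N_n/n^2$ with $\|N_n\| \le C_2$. Therefore
\begin{align}
\|e^{A/n} e^{B/n} - e^{(A+B)/n}\| \le C/n^2
\end{align}
for a constant $C = C_1 + C_2$ depending only on $\|A\|,\|B\|$.

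The main step is to lift this per-factor $O(1/n^2)$ discrepancy to the $n$-th power. I would invoke the telescoping identity
\begin{align}
X^n - Y^n = \sum_{k=0}^{n-1} X^k (X-Y) Y^{n-1-k}
\end{align}
with $X = e^{A/n} e^{B/n}$ and $Y = e^{(A+B)/n}$. Since $\|X\|, \|Y\| \le e^{(\|A\|+\|B\|)/n}$, their powers satisfy $\|X^k\|, \|Y^k\| \le e^{\|A\|+\|B\|} =: K$ for all $0 \le k \le n$. Consequently
\begin{align}
\|X^n - Y^n\| \le n \cdot K^2 \cdot C/n^2 = K^2 C / n \to 0,
\end{align}
which completes the proof of the first equality.

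The only genuine care point, rather than a serious obstacle, is the uniform-in-$n$ bound on the powers of $X$ and $Y$: a naive use of $\|X\|^n$ could blow up, so one must combine $\|X\| \le 1 + (\|A\|+\|B\|)/n + O(1/n^2)$ with the elementary inequality $(1 + c/n + O(1/n^2))^n \le e^{c} + o(1)$ to keep each term in the telescoping sum controlled by a fixed constant. Once this is secured, everything else is routine matrix-norm arithmetic.
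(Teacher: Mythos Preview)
Your argument is correct. The paper itself does not supply a proof of this classical result; it simply defers to Hall's textbook with the line ``The proof can be found in \cite{Hall2015Matrix}.'' The Taylor-expansion-plus-telescoping argument you outline is precisely the standard proof one finds in that reference: bound the local discrepancy $\|e^{A/n}e^{B/n} - e^{(A+B)/n}\|$ by $C/n^2$, then propagate via $X^n - Y^n = \sum_k X^k(X-Y)Y^{n-1-k}$ with uniform control $\|X^k\|,\|Y^k\| \le e^{\|A\|+\|B\|}$ on the powers. Your handling of the second equality via the explicit linear flow $\phi^{t}_{Ax}(x_0) = e^{At}x_0$ and the symmetry $A+B = B+A$ to absorb the order reversal induced by the $\comp$ convention is also fine. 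So there is nothing to compare: you have written out what the paper merely cites.
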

Here $e^A$ denotes the matrix exponential of $A$, which is also the flow map $\phi^1_{Ax}$ of the autonomous system $x'(t)=A x(t)$. The proof can be found in \citet{Hall2015Matrix} for example and the formula can be extended to multi-component cases. The formula can also be derived from the operator splitting approach \citep{holden2010splitting}, which allows us to obtain the following result.
\begin{lemma}\label{th:comp_split}
    Let $v_i: \mathbb{R}^d \to \mathbb{R}^d, i=1,2,...,m$ be Lipschitz continuous funcitons, $v = \sum_{i=1}^m v_i$, $\Omega$ be a compact domain. For any $t>0$ and $\varepsilon>0$, there is a positive integer $n$, such that the flow map $\phi^t_v$ can be approximated by composition of flow maps $\phi^{t/n}_{v_i}$  , \emph{i.e.}
    \begin{align*}
        \| \phi_{v}^{t}(x)
        -
        \big(\phi_{v_1}^{t/n} \comp \phi_{v_2}^{t /n} \comp ... \comp \phi_{v_m}^{t /n} \big)^{\comp n}
        (x)\|
        < \varepsilon, \quad \forall x \in \Omega.
    \end{align*}
\end{lemma}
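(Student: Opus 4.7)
The plan is to prove the lemma along the classical lines for convergence of Lie--Trotter operator splitting applied to ODEs: combine a one-step (local) error estimate of order $h^2$ with a telescoping argument that exploits the Lipschitz stability of the exact flow.

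First I would set $h = t/n$ and write $\psi = \phi_v^h$, $\chi = \phi_{v_1}^h \comp \cdots \comp \phi_{v_m}^h$. Since $\Omega$ is compact and $v = \sum_i v_i$ is Lipschitz, the exact trajectory set $\{\phi_v^s(x) : x \in \Omega,\, s \in [0,t]\}$ lies in a compact set $K_0 \subset \mathbb{R}^d$. Replacing each $v_i$ by $v_i$ multiplied by a smooth cutoff that equals $1$ on a neighborhood of $K_0$ produces globally Lipschitz and bounded vector fields whose flows agree with the originals on the set of interest. Let $L$ denote a common Lipschitz constant and $M$ a common sup-norm bound for these modified $v_i$ and for $v$.

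Next I would establish the one-step bound. Integrating each ODE yields
\begin{equation*}
\phi_{v_i}^h(y) = y + h v_i(y) + r_i(y,h), \qquad \|r_i(y,h)\| \le \tfrac{1}{2} L M h^2,
\end{equation*}
and composing the $m$ factors, expanding to first order in $h$ and using $L$-Lipschitzness of each $v_j$ to collapse nested evaluations like $v_j(y + h v_i(y) + O(h^2)) = v_j(y) + O(h)$, gives $\chi(y) = y + h \sum_i v_i(y) + O(h^2) = y + h v(y) + O(h^2)$; the same expansion holds for $\psi(y)$. Subtracting produces a uniform local error bound
\begin{equation*}
\sup_{y \in \mathbb{R}^d} \|\chi(y) - \psi(y)\| \le C h^2, \qquad C = C(L, M, m).
\end{equation*}

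Finally I would propagate. Each exact-flow iterate $\psi^{\comp k} = \phi_v^{kh}$ is $e^{Lkh} \le e^{Lt}$-Lipschitz, so, with $z_k := \chi^{\comp k}(x)$, the telescoping decomposition
\begin{equation*}
\chi^{\comp n}(x) - \psi^{\comp n}(x) = \sum_{k=0}^{n-1} \Bigl[ \psi^{\comp(n-k-1)}\bigl(\chi(z_k)\bigr) - \psi^{\comp(n-k-1)}\bigl(\psi(z_k)\bigr) \Bigr]
\end{equation*}
bounds each summand by $e^{Lt} C h^2$, giving total error at most $e^{Lt} C t^2 / n$, which is $<\varepsilon$ for $n$ sufficiently large, and $\psi^{\comp n}(x) = \phi_v^t(x)$ yields the claim. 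The main subtlety is the uniformity of the local bound: without the cutoff, I would need to control the drift of the splitting iterates $z_k$ relative to the exact trajectory to keep them in a region where $L$ and $M$ apply; introducing the cutoff at the outset circumvents this by making all constants apply globally, which is the cleanest way to dodge the one genuinely delicate point of the proof.
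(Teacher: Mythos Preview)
Your argument is correct and is the standard Lie--Trotter convergence proof; the paper's proof amounts to the same thing, consisting of the single sentence ``this is a special case of Lemma~A.1 with time-independent fields,'' where Lemma~A.1 is a splitting-convergence lemma proved by the usual local-error-plus-discrete-Gronwall argument. Two technical differences are worth noting. First, the paper's Lemma~A.1 literally uses forward-Euler sub-steps $x\mapsto x+h\,v_j(x)$ rather than the exact sub-flows $\phi_{v_j}^h$ that appear in the present statement, so the reduction is not verbatim; the missing ingredient is the routine $O(h^2)$ discrepancy between an Euler step and the exact flow, which is exactly the expansion you carry out. Second, where you introduce a cutoff to make the sup-bound $M$ global, the paper instead bounds the discrete iterates a priori by a Gronwall estimate $\|x_k\|\le (X+\tfrac{Vh}{1+L})e^{L(1+L)\tau}$, which keeps all estimates on a fixed compact set without modifying the fields; this sidesteps the implicit bootstrapping step you would still need to check, namely that for large $n$ the splitting iterates stay in the region where the cutoff equals $1$, so that the approximation you prove for the truncated fields transfers to the original $\phi_{v_i}^{t/n}$. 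Your route treats the exact-flow formulation directly; the paper's is marginally more economical in that one Gronwall bound replaces the cutoff-plus-bootstrap.
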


\subsection{Application of Kronecker's theorem}

\begin{theorem}[Kronecker's approximation theorem \citep{Apostol1990Kronecker}] \label{th:Kronecker}
    Let $\gamma \in \mathbb{R}$ be an irrational number, then for any $t\in \mathbb{R}$ and $\varepsilon>0$, there exist two integers $p$ and $q$ with $q>0$, such that $|\gamma q + p -t| < \varepsilon$. 
\end{theorem}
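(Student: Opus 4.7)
The plan is to derive Kronecker's theorem in two stages: first, show that $\{\{q\gamma\} : q \in \mathbb{Z}_+\}$ is dense in $[0,1)$, where $\{x\}$ denotes the fractional part of $x$; second, translate this circle-density statement into the claimed real-line bound by choosing the integer $p$ to absorb both the integer part of $q\gamma$ and any wrap-around.

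For the density step, I would run the classical Dirichlet pigeonhole argument. Fix an integer $N$ with $N > 2/\varepsilon$ and distribute the $N+1$ fractional parts $\{0\}, \{\gamma\}, \{2\gamma\}, \ldots, \{N\gamma\}$ among the $N$ half-open intervals $[k/N, (k+1)/N)$ for $k = 0, \ldots, N-1$. Two of them must share a subinterval, say $\{i\gamma\}$ and $\{j\gamma\}$ with $0 \le i < j \le N$. Setting $q_0 := j - i \in \{1, \ldots, N\}$ and $p_0 := \lfloor i\gamma \rfloor - \lfloor j\gamma \rfloor \in \mathbb{Z}$, I obtain $\delta := q_0\gamma + p_0 \in (-1/N,\, 1/N)$, with $\delta \ne 0$ because $\gamma$ is irrational.

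The orbit $\{kq_0\gamma \bmod 1\}_{k \ge 1}$ under the irrational rotation $x \mapsto x + q_0\gamma \bmod 1$ on $\mathbb{R}/\mathbb{Z}$ takes steps of circle-distance $\min(\{q_0\gamma\},\, 1 - \{q_0\gamma\}) \le |\delta| < \varepsilon/2$. Since the rotation is aperiodic (otherwise $Kq_0\gamma \in \mathbb{Z}$ for some $K \ge 1$, contradicting irrationality), after enough iterations the cyclic gaps between consecutive orbit points are all at most $|\delta|$. Hence for any $\tau \in [0,1)$ there is some $k \in \mathbb{Z}_+$ with circle-distance between $\{kq_0\gamma\}$ and $\tau$ less than $\varepsilon/2$.

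Applying this with $\tau = \{t\}$, I set $q := kq_0 > 0$. Writing $q\gamma = n + \{q\gamma\}$ and $t = m + \{t\}$ with $n, m \in \mathbb{Z}$, the circle bound means either $|\{q\gamma\} - \{t\}| < \varepsilon/2$ as real numbers, in which case $p := m - n$ gives $|q\gamma + p - t| < \varepsilon/2$; or the two fractional parts are close only through wrap-around, in which case one of $p := m - n + 1$ or $p := m - n - 1$ absorbs the missing integer and still yields $|q\gamma + p - t| < \varepsilon/2 < \varepsilon$. The only point requiring care is this wrap-around case split; the pigeonhole step and the irrational-rotation density are entirely standard, so I expect no substantial obstacle.
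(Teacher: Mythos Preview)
Your argument is correct and follows the classical Dirichlet--pigeonhole route to equidistribution of an irrational rotation; the only slightly informal step is the phrase ``after enough iterations the cyclic gaps are all at most $|\delta|$,'' but this is easily made precise since already the first $\lceil 1/|\delta|\rceil$ multiples of $\delta$ modulo $1$ leave no gap larger than $|\delta|$, without any appeal to aperiodicity.

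The paper, however, does not prove this theorem at all: it simply records it as a classical fact and defers to Apostol's textbook for the proof. So there is no approach to compare---you have supplied a self-contained argument where the paper is content to cite. Your version is more informative for a reader who does not have the reference at hand; the paper's choice keeps the focus on how Kronecker's theorem is \emph{used} (Lemma~\ref{th:comp_Kronecker}) rather than on re-deriving a standard number-theoretic fact.
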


Although Kronecker's Theorem \ref{th:Kronecker} is proposed for approximating real numbers, we can employ it in the scenario of approximating the flow map $\phi_v^t$ as it contains a real time parameter $t$. Choosing $\gamma = -\sqrt{2}$, approximating $t$ by $p-q\sqrt{2}$, then we can approximate $\phi_v^t$ by $\phi_v^{p-q\sqrt{2}}$. Considering positive $t$, we have $p$ is positive as $q$ is. Then the property of flow maps,
\begin{align*}
    \phi_v^{p-q\sqrt{2}} = \phi_v^{p} \comp \phi_v^{-q\sqrt{2}} = \phi_v^{p} \comp \phi_{-v}^{q\sqrt{2}}
    =(\phi_{v}^{1})^{\comp p} \comp (\phi_{-v}^{\sqrt{2}})^{\comp q},
\end{align*}
allow us to prove the following result.
\begin{lemma}\label{th:comp_Kronecker}
    Let $v: \mathbb{R}^d \to \mathbb{R}^d$ be a Lipschitz continuous function, $\Omega$ be a compact domain. For any $t>0$ and $\varepsilon>0$, there exist two positive integers $p$ and $q$, such that the flow map $\phi^t_v$ can be approximated by $(\phi_{v}^{1})^{\comp p} \comp (\phi_{-v}^{\sqrt{2}})^{\comp q}$, \emph{i.e.}
    \begin{align}
        \| \phi_{v}^{t}(x)
        -
        (\phi_{v}^{1})^{\comp p} \comp (\phi_{-v}^{\sqrt{2}})^{\comp q}(x)\|
        < \varepsilon, \quad \forall x \in \Omega.
    \end{align}
\end{lemma}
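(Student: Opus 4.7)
My plan is to combine Kronecker's theorem with the group/time-reversal identities of autonomous flows, using the Lipschitz dependence of $\phi_v^s$ on the time parameter $s$ as the quantitative bridge.

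First I would fix the target $t>0$ and the compact domain $\Omega$. Since $v$ is globally Lipschitz, standard ODE theory guarantees that $\phi_v^s$ exists for all $s\in\R$ and that the forward image $\mathcal{K}:=\{\phi_v^s(x):x\in\Omega,\ |s|\le t+1\}$ is again a compact set. On any such compact set, there is a uniform bound $M:=\sup_{y\in \mathcal{K}}\|v(y)\|<\infty$, and the identity $\phi_v^{s_2}(x)-\phi_v^{s_1}(x)=\int_{s_1}^{s_2} v(\phi_v^{r}(x))\,dr$ yields the Lipschitz-in-time estimate $\|\phi_v^{s_2}(x)-\phi_v^{s_1}(x)\|\le M|s_2-s_1|$ for all $x\in\Omega$ whenever $s_1,s_2\in[-(t+1),t+1]$.

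Next I would invoke Kronecker's approximation theorem (Theorem~\ref{th:Kronecker}) with $\gamma=-\sqrt{2}$ to obtain integers $p,q$ with $q>0$ and $|p-q\sqrt{2}-t|<\delta$, where $\delta:=\min(\varepsilon/M,1)$. To guarantee that $p$ is \emph{positive} (not merely an integer), I would note that from $p=t+q\sqrt{2}+O(\delta)$ and $q\ge 1$ one already has $p\ge t+\sqrt{2}-\delta>0$ for $\delta$ small, so rounding upward gives $p\in\mathbb{Z}_+$. (If needed, one can always replace $(p,q)$ with an equivalent approximation obtained by iterating the Kronecker construction so that $q$, and hence $p$, is as large as desired.)

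With such $(p,q)$ fixed, I would unfold the target composition using the basic flow identities $\phi_v^{s_1+s_2}=\phi_v^{s_1}\comp\phi_v^{s_2}$ and $\phi_v^{-s}=\phi_{-v}^{s}$:
\begin{align}
(\phi_v^{1})^{\comp p}\comp (\phi_{-v}^{\sqrt{2}})^{\comp q}
= \phi_v^{p}\comp \phi_{-v}^{q\sqrt{2}}
= \phi_v^{p}\comp \phi_{v}^{-q\sqrt{2}}
= \phi_v^{p-q\sqrt{2}}.
\end{align}
Because $|p-q\sqrt{2}-t|<\delta\le 1$, the number $s:=p-q\sqrt{2}$ lies in $[-(t+1),t+1]$, so the Lipschitz-in-time bound applies and gives $\|\phi_v^{t}(x)-\phi_v^{s}(x)\|\le M|s-t|<M\delta\le\varepsilon$ for all $x\in\Omega$, which is exactly the desired conclusion.

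The only real obstacle is the one flagged above: Kronecker's theorem as stated only guarantees $q>0$, not $p>0$, and we need both signs positive so that every factor in the composition is actually a flow in the \emph{forward} time-direction indicated by the elements of $V$. Handling this by forcing $q$ (and hence $p=t+q\sqrt{2}+O(\delta)$) to be large is straightforward but is the one place where some care is required beyond the mechanical application of the Lie/flow identities.
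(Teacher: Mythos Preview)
Your proposal is correct and follows essentially the same route as the paper: apply Kronecker's theorem with $\gamma=-\sqrt{2}$ to approximate $t$ by $p-q\sqrt{2}$, use the Lipschitz-in-time continuity of the flow to bound $\|\phi_v^t-\phi_v^{p-q\sqrt{2}}\|$, and then unfold $\phi_v^{p-q\sqrt{2}}$ via the group and time-reversal identities of autonomous flows. Your treatment is in fact slightly more careful than the paper's in two places: you restrict $s=p-q\sqrt{2}$ to a fixed time window $[-(t+1),t+1]$ so that the uniform bound $M$ on $\|v\|$ along trajectories is valid, and you spell out why $p>0$ follows from $q\ge 1$ and $t>0$, whereas the paper simply asserts both points.
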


\begin{corollary}\label{th:H_to_V}
    For any flow maps $h$ in $H_1 \cup H_2$, $\varepsilon>0$ and compact domain $\Omega\subset \mathbb{R}^d$, there is a sequence $\phi_1, \phi_2, ..., \phi_m$ in $V$ (Eq.~\ref{eq:V}) such that 
    \begin{align}
        \| h(x)
        -
        (\phi_{1} \comp \phi_2 \dots \comp \phi_m)(x)\|
        < \varepsilon, \quad \forall x \in \Omega.
    \end{align}
\end{corollary}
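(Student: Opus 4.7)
The plan is to express every $h \in H_1 \cup H_2$ as an approximate composition of the elementary flow maps that appear (up to sign and a scalar rescaling of time) in $V$, and then to invoke Kronecker's theorem to replace each such elementary map by a composition of time-$1$ and time-$\sqrt{2}$ flows that already lie in $V$. Three tools drive the argument: the operator-splitting estimate (Lemma \ref{th:comp_split}), the Kronecker approximation (Lemma \ref{th:comp_Kronecker}), and the composition-stability lemma (Lemma \ref{th:composition_approximation}).

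For $h = \phi^\tau_{Ax+b} \in H_1$, I would decompose the velocity field as
\begin{align*}
    Ax + b = \sum_{i,j=1}^d A_{ij}\, E_{ij} x + \sum_{i=1}^d b_i\, e_i .
\end{align*}
By Lemma \ref{th:comp_split}, for any $\delta > 0$ there exists $n \in \mathbb{Z}_+$ such that $\phi^\tau_{Ax+b}$ is uniformly $\delta$-close on $\Omega$ to the $n$-fold composition of the short-time flows $\phi^{\tau/n}_{A_{ij} E_{ij} x}$ and $\phi^{\tau/n}_{b_i e_i}$ taken in any fixed order. Each such factor has the form $\phi^s_w$ with $s \ge 0$ and $w \in \{\pm E_{ij}x,\, \pm e_i\}$, obtained by absorbing $|A_{ij}|$ (respectively $|b_i|$) into the time $s = |A_{ij}|\tau/n$ and pushing the sign into $w$. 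Whenever $s > 0$, Lemma \ref{th:comp_Kronecker} produces positive integers $p,q$ with
\begin{align*}
    \phi^s_w \approx (\phi^1_w)^{\comp p} \comp (\phi^{\sqrt{2}}_{-w})^{\comp q},
\end{align*}
and both factors on the right lie in $V$ by the $\pm$-symmetry in its definition; factors with $s = 0$ are the identity and can be dropped.

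The case $h = \phi^\tau_{\Sigma_{\boldsymbol{\alpha},\boldsymbol{\beta}}(x)} \in H_2$ is handled identically using the exact identity
\begin{align*}
    \Sigma_{\boldsymbol{\alpha},\boldsymbol{\beta}}(x) = \sum_{i=1}^d \bigl( \alpha_i\, \Sigma_{e_i,0}(x) + \beta_i\, \Sigma_{0,e_i}(x) \bigr),
\end{align*}
which writes the velocity as a sum of $2d$ Lipschitz summands $\pm|\alpha_i|\Sigma_{e_i,0}$ and $\pm|\beta_i|\Sigma_{0,e_i}$ matching exactly the leaky-ReLU generators of $V$. Splitting followed by Kronecker then yields a $V$-sentence approximating $h$.

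The main technical obstacle is controlling the accumulation of errors through the long composition: the Kronecker step approximates each elementary factor only uniformly on a fixed compact set, while every outer factor sends $\Omega$ into a different region of $\mathbb{R}^d$. I would handle this by first freezing $n$ (and hence the finite list of splitting factors) after the first use of Lemma \ref{th:comp_split}, then enlarging $\Omega$ to a compact $K \supset \Omega$ that contains the forward image of $\Omega$ under every initial segment of the resulting composition. All velocity fields involved are globally Lipschitz, so such $K$ exists and the partial compositions are uniformly Lipschitz on it; the per-factor Kronecker error can then be chosen so small that, after amplification by the product of Lipschitz constants of the subsequent factors, the total error on $\Omega$ stays below $\varepsilon - \delta$. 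Lemma \ref{th:composition_approximation} applied on $K$ packages this bookkeeping and completes the proof.
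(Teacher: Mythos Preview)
Your proposal is correct and follows essentially the same route as the paper: decompose the velocity field of $h$ via the identities $Ax+b=\sum_{i,j}A_{ij}E_{ij}x+\sum_i b_i e_i$ and $\Sigma_{\boldsymbol{\alpha},\boldsymbol{\beta}}=\sum_i\alpha_i\Sigma_{e_i,0}+\sum_i\beta_i\Sigma_{0,e_i}$, apply Lemma~\ref{th:comp_split} to split into elementary flows, and then Lemma~\ref{th:comp_Kronecker} to land in $V$. Your explicit handling of the error accumulation via Lemma~\ref{th:composition_approximation} and the enlarged compact set $K$ is more detailed than the paper's one-line proof, but the underlying argument is the same.
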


The result is obtained by directly employing Lemma~\ref{th:comp_split} and Lemma~\ref{th:comp_Kronecker} with the following splittings,
\begin{align}
    Ax + b &= 
    \sum_{i=1}^{d} 
    \sum_{j=1}^{d} 
    a_{ij} E_{ij} x + \sum_{i=1}^d b_i e_i,
    \\
    \Sigma_{\boldsymbol{ \alpha},\boldsymbol{ \beta}}(x)
    &=
    \sum_{i=1}^d \alpha_i \Sigma_{e_i,0}(x)
    + \sum_{i=1}^d \beta_i \Sigma_{0,e_i}(x).
\end{align}

\section{Proof of the construction Part 2}
\label{sec:construction}

This section provides the construction that OP diffeomorphisms can be approximated by composing a sequence of flow maps in $H_1 \cup H_2$. The construction contains three steps: (1) approximate OP diffeomorphisms by deep compositions using the splitting approach, (2) approximate each splitting component by composing flow maps in $H_1 \cup H_2$, (3) combine results to finish the construction.

\subsection{Approximate the OP diffeomorphism by deep compositions}

Employing results of \citet{Agrachev2010Dynamics} and \citet{Caponigro2011Orientation}, any OP diffeomorphism $\Psi$ can be approximated by flow maps of ODEs. Particularly, we can choose the ODEs as neural ODEs of the form 
\begin{align} \label{eq:NODE_v}
    \dot x(t) = 
    v(x(t),t) = 
    \sum\limits_{i=1}^N s_i(t)\sigma(w_i(t) \cdot x(t)+b_i(t)),
\end{align}
where the field function $v$ is a neural network with $N$ hidden neurons, the activation is chosen as the leaky-ReLU function $\sigma=\sigma_\alpha$ for some $\alpha \in (0,1)$, $s_i \in \mathbb{R}^d$, $w_i\in \mathbb{R}^d$ and $b_i\in \mathbb{R}$ are piecewise smooth functions of $t$. The universal approximation property of neural networks \citep{Cybenko1989Approximation} implies that $\Psi$ can be approximated by the flow map $\phi^\tau_{v}$ of Eq.~\eqref{eq:NODE_v} for some $\tau>0$ and $N \in \mathbb{Z}_+$ big enough.

Following the approach of \citet{Duan2022Vanilla}, we employ a proper splitting numerical scheme to discretize the neural ODE (\ref{eq:NODE_v}). Split the field $v$ as a summation of $Nd$ functions, $ v(x,t) = \sum_{i=1}^N \sum_{j=1}^d v_{ij}(x,t) {e}_j$, where ${e}_j$ is the $j$-th axis unit vector and
$ v_{ij}(x,t) = s_{ij}(t) \sigma(w_i(t) \cdot x+b_i(t))$
are scalar functions.
Then the numerical analysis theory of splitting methods \citep{holden2010splitting} ensures that the following composition $\Phi$ can approximate ${\phi}^\tau$ provided the time step $\Delta t:=\tau/n$ is sufficiently small,
\begin{align*}
    \Phi = &T_1 \comp T_2 \comp \cdots \comp T_n \\
    \equiv &
    (T_1^{(1,1)} \comp T_1^{(1,2)} \comp \dots \comp T_1^{(1,d)} \comp T_1^{(2,1)} \comp \dots \comp T_1^{(N,d)})
    \comp \\ &
    (T_2^{(1,1)} \comp T_2^{(1,2)} \comp \dots \comp T_2^{(1,d)} \comp T_2^{(2,1)} \comp \dots \comp T_2^{(N,d)})
    \comp \\&\dots \\ &\comp 
    (T_n^{(1,1)} \comp T_n^{(1,2)} \comp \dots \comp T_n^{(1,d)} \comp T_n^{(2,1)} \comp \dots \comp T_n^{(N,d)}),
\end{align*}
where the map $T_k^{(i,j)}: x \to y$ in each split step is
\begin{align}\label{eq:map_T_ijk}
    \left\{
    \begin{aligned} 
    & y^{(l)} = x^{(l)} , l \neq j,  \\
    & y^{(j)} = x^{(j)} + \Delta t v_{ij}(x,k \Delta t).
    \end{aligned}
    \right.
\end{align}
Here, the superscript in $x^{(l)}$ indicates the $l$-th coordinate of $x$. The map $T^{{i,j}}_k$ is given by the forward Euler discretization of $x'(t) = v_{i,j}(x(t),t) e_j$ in the interval $(k\Delta t, (k+1) \Delta t)$. Note that $v_{ij}$ is Lipschitz continuous on $\mathbb{R}^d$, hence the map  $T^{{i,j}}_k$ also is.

Below is the formal statement of the approximation in this step.

\begin{theorem}\label{th:OP_to_comp_Tijk}
Let $\Psi \in \text{Diff}_0(\Omega)$ be an orientation preserving diffeomorphism, $\Omega$ be a compact domain $\Omega \subset \mathbb{R}^d$. Then, for any $\varepsilon>0$, there is a sequence of transformations, $T_k^{(i,j)}$, is of the form Eq.~\eqref{eq:map_T_ijk} such that 
\begin{align}
    \|\Psi(x)-(T_1^{(1,1)} \comp T_1^{(1,2)} \comp 
    \dots \comp  T_n^{(N,d)})(x)\| \le \varepsilon, \quad
    \forall x \in \Omega.
    \nonumber
\end{align}
\end{theorem}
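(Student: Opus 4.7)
The plan is a three-stage reduction from $\Psi$ to the composition of the elementary maps $T_k^{(i,j)}$. In the first stage I invoke the result of Agrachev--Caponigro cited at the start of Section~\ref{sec:construction}, combined with the standard neural-network universal approximation theorem: given $\varepsilon>0$, I choose a time horizon $\tau>0$, a width $N\in\mathbb{Z}_+$, and piecewise smooth coefficients $s_i(t),w_i(t),b_i(t)$ so that the flow map $\phi_v^\tau$ of the time-dependent neural ODE~\eqref{eq:NODE_v} approximates $\Psi$ uniformly on $\Omega$ within $\varepsilon/3$. Since $\Omega$ and the entire trajectory bundle $\{x(t):x(0)\in\Omega,t\in[0,\tau]\}$ are contained in some compact set $K\subset\mathbb{R}^d$, I fix once and for all a slightly enlarged compact working set $\widetilde{K}\supset K$ and carry out every subsequent estimate there.

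In the second stage I apply operator splitting to the decomposition $v(x,t)=\sum_{i=1}^{N}\sum_{j=1}^{d}v_{ij}(x,t)\,e_j$. For a fine time grid $\Delta t=\tau/n$ I first approximate $\phi_v^\tau$ by the composition of $n$ autonomous flows, freezing the time dependence at $t=k\Delta t$ on each subinterval $[k\Delta t,(k+1)\Delta t]$; this is a standard first-order consistency argument for time-dependent ODEs with piecewise smooth coefficients. Within each frozen subinterval I then Lie-split the autonomous field $\sum_{i,j}v_{ij}(\cdot,k\Delta t)\,e_j$ into its $Nd$ scalar components and apply Lemma~\ref{th:comp_split} to replace its exact flow by the composition of the exact flows $S_k^{(i,j)}$ of the individual $v_{ij}(\cdot,k\Delta t)\,e_j$ over time $\Delta t$. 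Taking $n$ large enough ensures that the resulting composition $S_1^{(1,1)}\comp\cdots\comp S_n^{(N,d)}$ approximates $\phi_v^\tau$ within $\varepsilon/3$ on $\Omega$.

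In the third stage I replace each exact flow $S_k^{(i,j)}$ by its forward Euler step $T_k^{(i,j)}$ from Eq.~\eqref{eq:map_T_ijk}. Because $v_{ij}(\cdot,k\Delta t)\,e_j$ is Lipschitz, the local truncation error of this substitution is $O(\Delta t^2)$, and a telescoping argument combined with a Gronwall bound on the cumulative Lipschitz constants of the partial compositions yields a global error of $O(\Delta t)$ after all $nNd$ substitutions. Increasing $n$ further if necessary drives this contribution below $\varepsilon/3$, and the triangle inequality then assembles the three estimates into the claimed bound.

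The main obstacle is the a priori confinement of trajectories and the accompanying bookkeeping of Lipschitz constants. Both the splitting step and the Euler step rely on Gronwall-type inequalities whose constants grow with the length of the composition, so I must argue inductively that every partial composition $T_1^{(1,1)}\comp\cdots\comp T_k^{(i,j)}$ maps $\Omega$ into the fixed compact set $\widetilde{K}$ on which the $v_{ij}$ have a uniform Lipschitz bound. Once this invariance is secured, the three $\varepsilon/3$ estimates are routine and the theorem follows.
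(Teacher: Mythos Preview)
Your proposal is correct and follows the same two-step strategy as the paper: approximate $\Psi$ by the flow of a neural ODE via Agrachev--Caponigro together with neural-network UAP, then discretize that flow by operator splitting combined with forward Euler. The only difference is packaging---the paper collapses your Stages~2 and~3 into a single estimate (Lemma~\ref{th:split_approach} in the appendix, with piecewise \emph{constant} rather than piecewise smooth coefficients) that directly bounds the error between the exact flow and the Euler-split composition, so your intermediate layer of exact split flows $S_k^{(i,j)}$ is bypassed; the trajectory-confinement bookkeeping you flag is exactly the content of that lemma's Gronwall argument.
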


\subsection{Approximate each composition component by flow maps in $H_1$ and $H_2$}

Now we examine the map $T_k^{(i,j)}$ in each splitting step. Since all $T_k^{(i,j)}$ have the same structure (over a permutation), we only need to consider the case of $T_k^{(N,d)}$, which we simply denote as $T:x\to y$ of the form
\begin{align}\label{eq:map_T}
    T: \left\{
    \begin{aligned} 
    & y^{(i)} = x^{(i)} ,  i = 1,\cdots, d-1,\\
    & y^{(d)} = x^{(d)} + a  \sigma(w_{1} x^{(1)}+ \cdots +w_{d} x^{(d)} + b).
        \end{aligned}
    \right.
\end{align}
where $\sigma=\sigma_\alpha, \alpha \in (0,1)$, is the leaky-ReLU funciton, $a, b, w_1, ..., w_d \in \mathbb{R}$ are parameters. Since the time step $\Delta t$ in $T_k^{(i,j)}$ are small, we can assume the parameters satisfing $\max(1/\alpha,\alpha)|a w_d| < 1$.

\begin{lemma}\label{th:T_to_H}
Let $\alpha>0$ and $\max(1/\alpha,\alpha)|a w_d| < 1$, then the map $T$ in Eq.~\eqref{eq:map_T} is a composition of at most six flow maps in $H_1 \cup H_2$.
\end{lemma}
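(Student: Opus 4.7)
The plan is to realize $T$ as a short composition in which a single leaky-ReLU flow does the nonlinear work while the surrounding affine flows implement coordinate changes. The key algebraic identity is
\begin{equation*}
u + c\,\sigma_\alpha(u) \;=\; \sigma_{1+c\alpha,\,1+c}(u),
\end{equation*}
which shows that adding a scalar multiple of $\sigma_\alpha(u)$ to $u$ produces a generalized leaky-ReLU; under the hypothesis $\max(1/\alpha,\alpha)|aw_d| < 1$, the choice $c = aw_d$ makes both slopes $1+aw_d$ and $1+aw_d\alpha$ strictly positive, which is exactly the condition of Proposition~\ref{th:prop_leaky_relu} for membership in $H_2$.

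I would first dispatch the generic case $w_d > 0$. Let $P\colon x\mapsto x'$ be the affine map with $x'^{(j)} = x^{(j)}$ for $j<d$ and $x'^{(d)} = w\cdot x + b$. Its linear part is lower-triangular with diagonal $(1,\ldots,1,w_d)$; by transposition and a conjugation moving the nontrivial diagonal entry to the top-left slot (both permitted by Proposition~\ref{th:prop_linear}(1)), it matches the canonical form of Proposition~\ref{th:prop_linear}(2) with $\lambda = w_d > 0$, so $P, P^{-1} \in H_1$. Let $\Sigma \in H_2$ act as the identity on the first $d-1$ coordinates and as $z^{(d)}\mapsto z^{(d)} + aw_d\,\sigma_\alpha(z^{(d)})$ on the last. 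A direct coordinate-by-coordinate check gives $T = P^{-1}\comp\Sigma\comp P$, a composition of three flow maps. The case $w_d < 0$ reduces to this via the pointwise identity $\sigma_\alpha(-u) = -\sigma_{1,\alpha}(u)$ after the substitution $w \to -w$, $b \to -b$; the trivial case $w=0$ makes $T$ a pure translation in $H_1$.

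For the remaining degenerate case $w_d = 0$ with $w \neq 0$, the map $P$ above becomes singular. I would instead use a five-map scheme $P_2 \comp \Sigma_2 \comp P_1 \comp \Sigma_1 \comp P_0$: pick an index $i<d$ with $w_i \neq 0$; let $P_0 \in H_1$ place $\pm(w\cdot x + b)$ into the $i$-th coordinate (sign chosen so that the relevant diagonal entry is positive, as allowed by Proposition~\ref{th:prop_linear}(2) after a permutation-conjugation); let $\Sigma_1 \in H_2$ apply an appropriate one-dimensional leaky-ReLU to the $i$-th slot only; let the unipotent shear $P_1 \in H_1$ (linear part $I \pm a\, e_d e_i^T$) add $\pm a$ times the $i$-th coordinate to the $d$-th; let $\Sigma_2 \in H_2$ undo $\Sigma_1$ on the $i$-th coordinate, which is possible because every $\sigma_{\tilde\alpha,\tilde\beta}$ with positive slopes is a bijection with inverse $\sigma_{1/\tilde\alpha,\,1/\tilde\beta}$; finally take $P_2 = P_0^{-1}\in H_1$. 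Direct computation yields $T$ as a composition of at most five flow maps, comfortably within the claimed bound of six.

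The main obstacle throughout is the real-matrix-logarithm constraint: not every invertible affine map lies in $H_1$, so the positivity of the ``active'' weight (either $w_d$ in the main case, or $w_i$ in the degenerate case, possibly after a sign flip) is what lets us invoke Proposition~\ref{th:prop_linear}. A secondary point to verify is that the inherited step-size bound $\max(1/\alpha,\alpha)|aw_d|<1$ is precisely what keeps both slopes of the central $\Sigma$ strictly positive, so that $\Sigma$ genuinely lies in $H_2$.
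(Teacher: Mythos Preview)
Your construction is correct in substance and in fact tighter than the paper's. Both arguments follow the same template---conjugate by an affine flow that writes $\nu=w\cdot x+b$ into one coordinate, apply a one-dimensional generalized leaky-ReLU there, then undo---but when $w_d\neq 0$ you choose the $d$-th slot (the output coordinate) as the working slot, whereas the paper always uses an index $i<d$. Your choice makes the conjugation self-closing: after $P$ and $\Sigma$ the $d$-th coordinate holds $(w\cdot x+b)+aw_d\,\sigma_\alpha(w\cdot x+b)$, and $P^{-1}$ divides by $w_d$ to land directly on $x^{(d)}+a\,\sigma_\alpha(w\cdot x+b)$, so three maps suffice. The paper, having placed $\nu$ in slot~$1$, must restore $\nu$ from $\sigma(\nu)$ (its $F_3$) and then insert a compensating step $F_4:\nu\mapsto\nu+w_d a\,\sigma(\nu)$ so that $F_5=F_0^{-1}$ still recovers $x^{(1)}$ despite the altered $d$-th coordinate; this costs six maps. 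Your degenerate case $w_d=0$ is exactly the paper's six-step scheme with $F_4$ deleted, which is legitimate precisely because $w_d=0$ kills the term $F_4$ was compensating for.

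One slip to fix: you have the $\comp$ order reversed throughout. In this paper $\comp$ reads left to right, so $P^{-1}\comp\Sigma\comp P$ means ``apply $P^{-1}$ first,'' which does not produce $T$; the intended formula is $T=P\comp\Sigma\comp P^{-1}$, and likewise $P_0\comp\Sigma_1\comp P_1\comp\Sigma_2\comp P_2$ in the degenerate case. Your verbal descriptions make the intended order clear, so this is notational rather than structural.
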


Noting that the case of $w_1=...=w_{d-1}=0$ is trivial, we can assume $w_1 \neq 0$ without loss of generality. Then, the bias parameter $b$ can be absorbed in $x^{(1)}$ using an affine flow map; hence we only need to consider the case of $b=0$. In addition, using the property of leaky-ReLU, $\sigma_\alpha(x) = - \alpha \sigma_{1/\alpha}(-x)$, we can further assume $w_1>0$. As a result, the map $T$ can be represented by the following composition,
\begin{align}
    T(x) = F_0 \comp F_1 \comp \cdots \comp F_5(x),
\end{align}
where each composition step is as follows,
\begin{align*} 
    \left(
        \begin{matrix} x^{(1)} \\ x^{(2:d-1)} \\ x^{(d)} 
        \end{matrix}\right)
    &\underrightarrow{F_0}
    \left(
        \begin{matrix} \nu \\ x^{(2:d-1)}\\ x^{(d)} 
        \end{matrix}\right)   
    \underrightarrow{F_1}
    \left(
        \begin{matrix} \sigma{(\nu)} \\ x^{(2:d-1)}\\ x^{(d)}
        \end{matrix}\right)
    \\
    &\underrightarrow{F_2}
    \left(
        \begin{matrix} \sigma{(\nu)} \\ x^{(2:d-1)}\\ x^{(d)}+ a \sigma{(\nu)} 
        \end{matrix}\right)
    \underrightarrow{F_3}
    \left(
        \begin{matrix} \nu \\ x^{(2:d-1)}\\ x^{(d)} + a \sigma{(\nu)} 
        \end{matrix}\right)
    \\&\underrightarrow{F_4}
    \left(
        \begin{matrix} \nu+w_{d} a \sigma{(\nu)} \\ x^{(2:d-1)}\\ x^{(d)}+ a \sigma{(\nu)} 
        \end{matrix}\right)
    \underrightarrow{F_5}
    \left(
        \begin{matrix} x^{(1)} \\ x^{(2:d-1)}\\ x^{(d)}+ a \sigma{(\nu)} 
        \end{matrix}\right).
\end{align*}
Here, $\nu:=w_{1}x^{(1)}+\cdots+w_{d}x^{(d)}$ and $x^{(2:d-1)}$ represent the elements $x^{(2)},...,x^{(d-1)}$.

We clarify that each component $F_i,i=0,\cdots,5,$ are flow maps in $H_1\cup H_2$. In fact, $F_0,F_2,F_5=F^{-1}_0$ are affine mappings,
\begin{align}
    F_0(x) &= \left(
        \begin{matrix} 
            w_{1} &w_{2:d}\\ 
            0 & I_{d-1}
        \end{matrix}
        \right) x, \\
    F_2(x) &= \left(
        \begin{matrix} 
            I_{d-1} &0\\ 
            (a,0_{2:d-1}) & 1
        \end{matrix}
        \right) x, \\
    F_5(x) &= \left(
        \begin{matrix} 
            1/w_1 &-w_{2:d}/w_1\\ 
            0 & I_{d-1}
        \end{matrix}
        \right) x,
\end{align}
where $I_{d-1}$ is the identity matrix, $(a,0_{2:d-1})=(a,0,...,0)$ with $d-2$ zeros. According to Proposition \ref{th:prop_linear}, they are flow maps in $H_1$. 
In addition, according to Proposition \ref{th:prop_leaky_relu}, $F_1, F_3$ and $F_4$ are leaky-ReLU flow maps in $H_2$ as 
\begin{align}
    F_1 &= \Sigma_{(\alpha,1_{2:d}),1_{1:d}}, \\
    F_3 &= \Sigma_{(1/\alpha,1_{2:d}),1_{1:d}}, \\
    F_4 &= \Sigma_{(1+w_d a \alpha,1_{2:d}), (1+w_d a,1_{2:d}) }.
\end{align}
Here, the condition $\max(1/\alpha,\alpha)|a w_d| < 1$ is used to ensure $1+w_d a \alpha >0$ and $1+w_d a>0$.

\subsection{Finish the construction}

Combining Theorem \ref{th:OP_to_comp_Tijk} and Lemma \ref{th:T_to_H} above, and using the fact in Lemma
~\ref{th:composition_approximation}, we have the following result.

\begin{theorem}\label{th:OP_to_comp_H}
Let $\Psi \in \text{Diff}_0(\Omega)$ be an orientation preserving diffeomorphism, $\Omega$ be a compact domain $\Omega \subset \mathbb{R}^d$. Then, for any $\varepsilon>0$, there is a sequence of flow maps, $h_1, h_2, ...,h_m, m \in \mathbb{Z}_+$, in $H = H_1 \cup H_2$ such that
\begin{align}
    \|\Psi(x)-(h_1\comp h_2\comp ...\comp h_m)(x)\| \le \varepsilon, \quad
    \forall x \in \Omega.
\end{align}
\end{theorem}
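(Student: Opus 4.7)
The plan is to chain the two results just established. By Theorem \ref{th:OP_to_comp_Tijk}, I first pick $n$ large enough so that the splitting composition $\Phi = T_1^{(1,1)} \comp T_1^{(1,2)} \comp \dots \comp T_n^{(N,d)}$ satisfies $\|\Psi(x)-\Phi(x)\| \le \varepsilon$ for all $x\in\Omega$. By Lemma \ref{th:T_to_H}, each elementary map $T_k^{(i,j)}$ in that chain is \emph{exactly} a composition of at most six flow maps drawn from $H_1 \cup H_2$. Substituting these six-map expansions in place of every factor of $\Phi$ produces a finite sequence $h_1,\dots,h_m \in H_1\cup H_2$ (with $m \le 6Ndn$) whose composition equals $\Phi$, and therefore approximates $\Psi$ within $\varepsilon$ on $\Omega$. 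No additional approximation layer is required at this stage; Lemma \ref{th:composition_approximation} enters only as a safety net, confirming that a finite exact rewriting of a composition preserves the uniform error.

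Two preliminary checks are needed before invoking Lemma \ref{th:T_to_H}. First, the hypothesis $\max(1/\alpha,\alpha)|a w_d| < 1$ must hold for every $T_k^{(i,j)}$. In the notation of Theorem \ref{th:OP_to_comp_Tijk}, the coefficient playing the role of $a$ is $\Delta t \cdot s_{ij}(k\Delta t)$ and the coefficient playing the role of $w_d$ is the relevant component of $w_i(k\Delta t)$. Since $s_i, w_i, b_i$ are piecewise smooth on the compact interval $[0,\tau]$, both factors are bounded uniformly in $k,i,j$, so the condition reduces to a smallness constraint on $\Delta t = \tau/n$. I would therefore, if necessary, enlarge the $n$ supplied by Theorem \ref{th:OP_to_comp_Tijk} so that the bound holds simultaneously across all split steps, which cannot degrade the $\varepsilon$-bound.

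Second, Lemma \ref{th:T_to_H} is stated for the template Eq.~\eqref{eq:map_T} that updates the $d$-th coordinate, whereas a general $T_k^{(i,j)}$ updates the $j$-th coordinate using the $i$-th neuron. Rather than conjugating by an external coordinate permutation (a single swap has determinant $-1$ and no real matrix logarithm, hence need not lie in $H_1$), I would simply repeat the six-step decomposition $F_0 \comp \dots \comp F_5$ with indices relabeled: pick any $l\neq j$ with $w_l \neq 0$ to serve as the coordinate that temporarily stores $\nu = w_1 x^{(1)} + \dots + w_d x^{(d)}$, and write the analogous $F_0, F_2, F_5$ as triangular affine maps covered by Proposition \ref{th:prop_linear} and $F_1, F_3, F_4$ as coordinate-wise leaky-ReLU flows covered by Proposition \ref{th:prop_leaky_relu}. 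The degenerate sub-case in which the only nonzero weight is $w_j$ itself reduces $T$ to a one-dimensional leaky-ReLU flow acting on the $j$-th coordinate, which is already in $H_2$.

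I expect the main obstacle to be purely bookkeeping: tracking how enlarging $n$ to secure the smallness condition interacts with the $\varepsilon$-control from Theorem \ref{th:OP_to_comp_Tijk}, and writing out the coordinate-relabeled six-step decomposition so that it is visibly a composition of flows in $H_1\cup H_2$ for every $(i,j,k)$ triple rather than only for $(N,d,k)$. There is no genuinely new analytic content beyond the two results being combined.
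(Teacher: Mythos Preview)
Your proposal is correct and follows essentially the same approach as the paper: apply Theorem~\ref{th:OP_to_comp_Tijk}, enlarge $n$ so that $\max(1/\alpha,\alpha)|a w_d|<1$ holds for every split step (the paper phrases this as $\max(1/\alpha,\alpha)C^2\Delta t<1$ with $C$ a uniform bound on the piecewise-constant coefficients), and then invoke Lemma~\ref{th:T_to_H} on each factor. You are in fact more careful than the paper on the coordinate-relabeling issue: the paper simply asserts that all $T_k^{(i,j)}$ ``have the same structure (over a permutation)'' and treats only the $(N,d)$ template, whereas you correctly flag that a transposition is not in $H_1$ and propose redoing the six-step construction with the roles of the coordinates swapped, which is the right fix.
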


Then we can finish the construction for Theorem~\ref{th:main_V} by combining Corollary~\ref{th:H_to_V} and Theorem~\ref{th:OP_to_comp_H}.

\section{Compositional model for regular languages}
\label{sec:flow_grammar}

Regular languages constitute a basic type of language characterized by a symbolic formalism. Following the definition of matrix grammars and compositional matrix-space models (CMSM) of language in \cite{Rudolph2010Compositional}, here we define the flow grammars which serve as an alternative compositional model, we call it the compositional flow-space model (CFSM), for regular languages. 

It is well-known that deterministic finite automatons recognize exactly the set of regular languages (see Chapter 3.2 of \cite{Hopcroft2006Introduction} for example).
A deterministic finite automaton (DFA) $M$ is a 5-tuple, $M = (Q, \Sigma, \delta, q_0, F)$, consisting of a finite set of states $Q = \{q_0, . . . , q_{n-1}\}$, a finite set of input symbols called the alphabet $\Sigma$, a transition function $\delta: Q \times \Sigma\to Q$, an initial or start state $q_0$ and a set of accept states $F\subseteq Q$.
Let $w = s_1 s_2\cdots s_m$ be a string over the alphabet $\Sigma$. The automaton $M$ accepts the string $w$ if a sequence of states, $r_0, r_1, \cdots r_n$, exists in $Q$ such that $r_0 = q_0$, $r_{i+1} = \delta(r_i, s_{i+1})$ for $i = 0, \cdots m - 1$ and $r_{m}\in F$. Otherwise, it is said that the automaton rejects the string. The set of strings that $M$ accepts is the language recognized by $M$ and this language is denoted by $L(M)$.

\begin{definition}
    Let $\Omega=[0,1]^d, d\ge2,$ $p\in[1,\infty)$, and $\Sigma$ be an alphabet. A flow grammar $\mathcal{M}$ is defined as the pair $(\left\langle \cdot \right\rangle, A)$ where $\left\langle \cdot \right\rangle$ is a mapping from $\Sigma$ to $C(\Omega,\Omega)$ and $A = (g(\cdot),\rho(\cdot),\epsilon) \in C(\Omega,\mathbb{R}^d) \times C(\Omega,\mathbb{R}^+) \times \mathbb{R}^+$ characterizes the acceptance condition. The language generated by $\mathcal{M}$, denoted by $L(\mathcal{M})$, contains a string $w=s_1 s_2...s_m \in \Sigma^*$ exactly if
    \begin{align}
        I(w) = 
        \int_{\Omega} \rho(x)
        \big\|
            g(x) - 
        \left\langle w \right\rangle (x)
        \big\|^p  dx
        < \epsilon,
    \end{align}
    where $\left\langle w \right\rangle$ is defined as the composition 
    $\left\langle s_1 \right\rangle
    \comp\cdots
    \comp
    \left\langle s_m \right\rangle$.
    We will call a language $L$ flowable if $L = L(\mathcal{M})$ for some flow grammar $\mathcal{M}$.
\end{definition}
Remark that here we don't enforce the embedding $\left\langle s_i \right\rangle$ as a flow map in the definition because it can be approximated by flow maps according to the result of \cite{Brenier2003Approximation}. In addition, the density function $\rho(\cdot)$ can be generalized to more general functions. For example, choose $\rho(\cdot)$ as a Dirac measure at a point $x^*$, then the acceptance condition becomes $I(w)=\|g(x^*) - \left\langle w \right\rangle (x^*)
\big\|^p < \epsilon$ which avoids calculating integral values. Furthermore, similar to the matrix grammar defined in \cite{Rudolph2010Compositional}, one can also consider the acceptance condition such as the inner products $g_j(x^*_j)\cdot \left\langle w \right\rangle (x^*_j) \ge \gamma_j$ for some given functions $g_j \in C(\Omega,\mathbb{R}^d)$, points $x_j^* \in \mathbb{R}^d$ and numbers $\gamma_j \in \mathbb{R}$.

The following theorem indicates that regular languages are flowable.

\begin{theorem}
    \label{th:L_flowable}
    Let $\Omega=[0,1]^d, d\ge2,$ $p\in[1,\infty)$ be fixed. For any DFA $M$, there is a flow grammar $\mathcal{M}$ such that $L(M) = L(\mathcal{M})$.
\end{theorem}

The proof is constructive and here we use an example to show the main idea. The language recognized by the DFA $M=(\{S_1, S_2\},\{0,1\}, \delta, S_1, \{S_1\})$ in Figure~\ref{fig:DFA}(a) is the regular language given by the regular expression (1*)(0(1*)0(1*))*, where * is the Kleene star, \emph{e.g.}, 1* denotes any number (possibly zero) of consecutive ones. To construct the flow grammar $\mathcal{M}$, we choose two disconnected small cubes, $\Omega_1$ and $\Omega_2$, in $\Omega$. Then choose $\left\langle 1 \right\rangle$ be the indentity map and let $\left\langle 0 \right\rangle$ be a flow map which moves $\Omega_i$ to $\Omega_j$, $i \neq j$. The image $\left\langle 0 \right\rangle (x)$ for $x$ in $\Omega \setminus  (\Omega_1 \cup \Omega_2)$ is defined via mapping extention. It is obvious that a string $w = s_1 s_2\cdots s_m$ belongs to $L(M)$ if and only if $\left\langle w \right\rangle = 
\left\langle s_1 \right\rangle
\comp\cdots
\comp
\left\langle s_m \right\rangle$
keeps $\Omega_1$ unchanged. As a consequence, we can design $g(\cdot)$ as the identity map, $\rho(\cdot)$ as a continuous approximation of the characteristic function $\chi_{\Omega_1}$ for $\Omega_1$ and $\epsilon$ as a positive number small enough. The language generated by this flow grammar $\mathcal{M}$ satisfies the requirement that $L(M) = L(\mathcal{M})$.

Note that both the matrix grammars in \cite{Rudolph2010Compositional} and our constructed flow grammars are compositional models cover the regular languages. CMSM embedded symbols as linear mappings while CFSM allows us to use nonlinear mappings. Therefore, CFSM can be regarded as a natural generalization of CMSM.
Here we address their differences as well. Since the mapping space is significantly larger than the matrix space, CFSM is expected to have more powerful expressivity than CMSM. Noted that every regular language has a minimum number of required states (precisely the size $n$ of state set $Q$, which is not always easy to obtain), which must be explicitly involved in the DFA and CFSM models. For example, in CMSM, the order of the embedded matrix must be greater than the number of states, otherwise, it might be impossible to recognize the regular language. On the contrary, in CFSM, the required dimension $d$ (greater than or equal to 2) is independent of the number of required states. This is a benefit because the continuous function space is rich enough to capture any finite number of states.

\begin{figure}[htp!]
    \center
    \includegraphics[width=8cm]{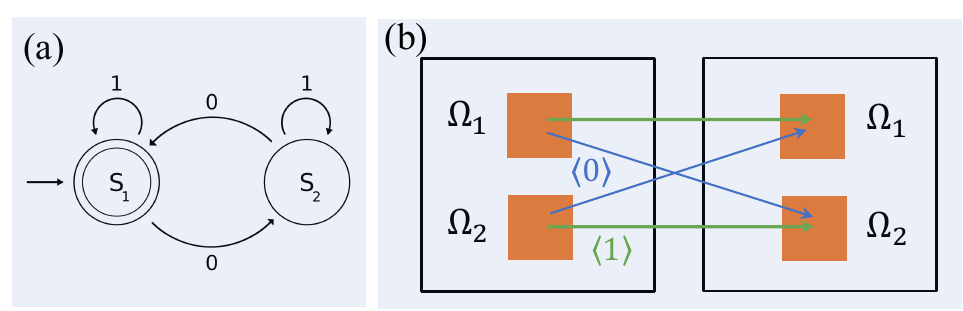}
    \caption{Example of (a) DFA and (b) flow grammar. }
    \label{fig:DFA}
\end{figure} 

\section{Conclusion}
\label{sec:conclusion}

This paper examined the approximation property of mapping composition from a sequential perspective. We proved, for the first time, that the universal approximation for diffeomorphisms and high-dimensional continuous functions can be achieved by using a finite number of sequential mappings. 
Our result implies that the universal approximations can be easily achieved. Importantly, the mappings used in our composition are flow maps of dynamical systems and do not increase the dimensions. However, our result is restricted to mappings on a compact domain. It is interesting to study whether it is possible to generalize this result to the case of mappings on unbounded domains.

Our Theorem~\ref{th:main-OP} was inspired by the fact of finite vocabulary in natural languages, where $V$ can be mimicked to a ``vocabulary'', $H_1$ and $H_2$ to ``phrases'', and $\mathcal{H}_V$ to ``sentences''. Our results provide a novel aspect for composite mappings, and we hope our findings could in turn inspire related research for the algorithm and modeling communities.
For example, one can embed words as nonlinear mappings instead of vectors or matrices in traditional models. We think there are at least three benefits of such embedding, \emph{i.e.}, the compositional flow-space model: (1) CFSM can capture the order of words. The compositional matrix-space model also has this property, but the classical vector embedding models do not. (2) CFSM has rich expression ability. The reason is that the continuous function space is infinite-dimensional, while the vector space and matrix space are finite-dimensional. (3) CFSM has the ability to cover regular grammars. Different with CFSM where the embedding dimension should be adaptive to the target languages, the dimension $d$ in our CFSM can be fixed as any integer $d\ge 2$ where the complexity of the languages is captured by the complexity of embedded mappings. 

Regular language is the simplest formal language in the Chomsky grammar system. For the CFSM considered in this paper, it is not difficult to imagine that it can represent many complex languages. However, it is very difficult to characterize the expressive range of a language model. In fact, even the matrix grammar can represent some special complex languages, and it is open whether the matrix grammar covers context-free languages \cite{Rudolph2010Compositional}. For this reason, this paper only discusses the regular languages and leaves the topic on more complex languages as future works. In addition, it's interesting to construct CFSM embedding in practice. Potential insights for building models for natural language processing are discussed in Appendix~\ref{sec:NLP}.

It should be noted that we use the terms of vocabulary, words, phrases, and sentences in this paper because they are the source of inspiration for proposing our main theorems in Section \ref{sec:main}. In addition, using these terms could help readers to understand our theorems and proofs more quickly. However, these terms can also be used in any compositional system, and the correspondence has a very loose connection to linguistics. It is interesting to explore whether are there any further similarities between mapping composition and linguistics.

It should be also noted that this paper focuses on the existence of a finite vocabulary and the constructed $V$ in Eq.~\eqref{eq:V} is not optimal. If a sequential composition of mappings in such $V$ is used to approximate functions in practical applications, the required sequence length may be extremely large. However, in practical applications, it is often only necessary to approximate a certain small set of continuous functions, hence designing an efficient vocabulary for them would be a fascinating future direction.






\section*{Acknowledgements}



This research is supported by the National Natural Science Foundation of China (Grant No. 12201053 and No. 12171043). I thank anonymous reviewers for their valuable comments and useful suggestions, and I am deeply grateful to my wife for allowing me to focus on research instead of struggling with life.

\section*{Impact Statement}




This paper presents work whose goal is to advance the field of Machine Learning. There are many potential societal consequences of our work, none of which we feel must be specifically highlighted here.


\bibliography{tex/refs.bib}
\bibliographystyle{icml2024}

\newpage
\appendix
\onecolumn




\section{Additional lemmas}

It is well known that an ODE system can be approximated by many numerical methods. Particularly, we use the splitting approach \citep{holden2010splitting}. Let $v(x,t)$ be the summation of several functions,
\begin{align}
    v(x,t) = \sum_{j=1}^J v_j(x,t), J \in \mathbb{Z}_+.
\end{align}
For a given time step $\Delta t$, we define the iteration as
\begin{align}\label{eq:iteration_T1}
    x_{k+1} = T_k x_k = T_k^{(J)} \circ \dots \circ T_k^{(2)} \circ T_k^{(1)} x_k,
\end{align}
where the map $T_k^{(j)}: x \to y$ is
\begin{align}
    y \equiv T_k^{(j)}(x) = x + \Delta t v_j(x,t_k)
    \quad
    t_k = k \Delta t.
\end{align}

\begin{lemma}\label{th:split_approach}
Let all $v_j(x,t), j=1,2,...,J,$ and $v(x,t), (x,t)\in \mathbb{R}^d \times [0,\tau],$ be piecewise constant (w.r.t. $t$) and $L$-Lipschitz ($L>0$). Then, for any $\tau>0$, $\varepsilon>0$ and $x_0$ in a compact domain $\Omega$, there exist a positive integer $n$ and $\Delta t = \tau/n < 1$ such that $\|x(\tau) - x_n\| \le \varepsilon$.
\end{lemma}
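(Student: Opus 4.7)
The plan is the classical \emph{consistency plus stability implies convergence} argument for one-step ODE integrators, organized in three stages: reduce to an autonomous ODE using the piecewise-constant time dependence, prove one-step consistency of order $\Delta t^2$, and convert this into a global $O(\Delta t)$ error bound via discrete Gronwall.

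First I would reduce to autonomous pieces. Since $v$ and each $v_j$ are piecewise constant in $t$, there are finitely many breakpoints $0 = s_0 < s_1 < \dots < s_M = \tau$ on which all velocities are independent of $t$. On each subinterval $[s_{m-1},s_m]$ the ODE is autonomous with $v = \sum_j v_j$ still $L$-Lipschitz, and the exact flow over $[0,\tau]$ is the composition of the subinterval flow maps. The numerical trajectory over a full subinterval is likewise handled by the autonomous analysis. At most $M-1$ of the discrete steps will straddle a breakpoint $s_m$; on those exceptional steps I would bound the one-step discrepancy crudely by $O(\Delta t)$, and since $M$ is a fixed data-dependent constant their total contribution is $O(\Delta t)$, which vanishes with $\Delta t$.

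Next I would prove the one-step error bound. A compactness argument (initial condition $x_0$ in $\Omega$, exact flow Lipschitz with constant $e^{L\tau}$) confines the exact trajectory to a compact set $K_0 \supset \Omega$; by induction on $k$, and for $\Delta t$ small enough, the iterates $x_k$ also stay in a slightly enlarged compact $K$ on which the $v_j$ are uniformly bounded by some $M_0$. Telescoping along $T_k = T_k^{(J)} \circ \dots \circ T_k^{(1)}$ and repeatedly using the Lipschitz estimate for each $v_j$ yields
\begin{align}
T_k(x) = x + \Delta t \sum_{j=1}^{J} v_j(x,t_k) + R_k(x), \qquad \|R_k(x)\| \le C_1\,\Delta t^2,
\end{align}
on $K$, with $C_1$ depending only on $L$, $J$, and $M_0$. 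Combined with the Taylor expansion $\phi_v^{\Delta t}(x) = x + \Delta t\,v(x,t_k) + O(\Delta t^2)$ of the exact autonomous flow, this gives local consistency $\|\phi_v^{\Delta t}(x) - T_k(x)\| \le C_2\,\Delta t^2$.

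Finally I would close the loop by discrete Gronwall. Setting $e_k = \|x(t_k) - x_k\|$ and using that the exact autonomous flow is $e^{L\Delta t}$-Lipschitz in the initial data, the local bound propagates as
\begin{align}
e_{k+1} \le e^{L\Delta t}\, e_k + C_2\,\Delta t^2,
\end{align}
and iterating gives $e_n \le C_3\,\tau\,e^{L\tau}\,\Delta t$. Taking $n$ sufficiently large (in particular so that $\Delta t = \tau/n < 1$) then yields $e_n \le \varepsilon$, as required. The main technical obstacle is the bookkeeping: arranging the constants so that the induction keeping the $x_k$ inside $K$ closes with bounds independent of $n$, and verifying that the $O(M\,\Delta t)$ contribution from the finitely many breakpoint-straddling steps (where the one-step consistency degrades from $\Delta t^2$ to $\Delta t$) can still be absorbed into $\varepsilon$. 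Neither issue is deep, but both require choosing constants in the correct order.
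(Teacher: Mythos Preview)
Your proposal is correct and follows essentially the same route as the paper: both arguments expand the one-step splitting map as $x + \Delta t\sum_j v_j + O(\Delta t^2)$ using the Lipschitz bounds, confine the iterates to a compact set, and close with a discrete Gronwall recursion $e_{k+1} \le (1+O(\Delta t))e_k + O(\Delta t^2)$. The only cosmetic differences are that the paper reduces to $J=2$ and aligns the time grid with the breakpoints (so no straddling steps occur), whereas you telescope for general $J$ and absorb the $O(M\,\Delta t)$ straddling contribution directly; both choices lead to the same $O(\Delta t)$ global bound.
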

\begin{proof}
Without loss of generality, we only consider $J=2$. (The general $J$ case can be proven accordingly.) In addition, we assume $v_j$ are constant w.r.t. $t$ in each interval $[t_k,t_{k+1})$ the time step, \emph{i.e.} $v(x,t) = v(x,t_k), t\in [t_k,t_{k+1})$. This can be arrived at by choosing small enough $\Delta t$ and adjusting the time step to match the piecewise points of $v_j$. Thus, we have
\begin{align*}
    x_{k+1} &= T_k^{(2)} (x_k + \Delta t v_1(x_k,t_k))\\
            & = x_k + \Delta t v_1(x_k,t_k) + \Delta t v_2(x_k + \Delta t v_1(x_k, t_k),t_k)\\
            &= x_k + \Delta t (v_1(x_k,t_k) + v_2(x_k,t_k)) 
    + \Delta t^2 R_k.
\end{align*}
Since $v_i(x,t)$ are Lipschitz, the residual term $R_k$ is bounded by a constant $R$ that is independent of $k$. In fact, we have
\begin{align*}
    \|R_k\| &= \|v_2(x_k + \Delta t v_1(x_k, t_k),t_k) - v_2(x_k,t_k)\|/\Delta t \\
    &\le
    L \|v_1(x_k, t_k)\| 
    \le L( \|v_1(0, t_k)\| + L \|x_k\|).
\end{align*}
Let $V := \sup \{\|v_j\| | t \in (0,\tau)\}, X:= \sup \{\|x_0\| | x_0 \in \Omega\}$, then we have
\begin{align}
    \|x_{k+1}\| \le (1+L\Delta t) \|x_k\| + \Delta t^2 L(V + L \|x_k\|).
\end{align}
As a result, $\|x_k\|$ is bounded by $B:= (X + \frac{V \Delta t}{1+L}) e^{L(1+L)\tau}$ and $\|R_k\|$ is bounded by $R:=L(V+LB)$. 

Using the integral form of the ODE
and defining the error as $e_k:=x_k-x(t_k)$, we have the following estimation:
\begin{align*}
    \|e_{k+1}\|
    &=
    \|e_k + 
    \int_{t_k}^{t_{k+1}} (v(x_k,t_k) - v(x(t),t) ) dt
    + R_k \Delta t^2 \|\\
    &\le 
    \|e_k\| + 
    \int_{t_k}^{t_{k+1}} \|v(x(t),t_k) - v(x_k,t_k)\| dt
    + \|R_k\| \Delta t^2\\
    &\le (1+L\Delta t)\|e_k\| + R \Delta t^2.
\end{align*}
Employing the inequality $(1+L\Delta t)^k \le e^{Lk \Delta t } \le e^{L\tau}$ and the initial error $e_0=0$, we have
\begin{align}
    \|e_{k}\|
    \le
    (1+L\Delta t)^k\|e_0\|
    +
    \frac{R \Delta t^2}{L\Delta t}[(1+L\Delta t)^k-1]
    \le 
    R \Delta t (e^{L\tau}-1)/L .
\end{align}
For any $\varepsilon >0$, let $n \ge [\frac{R\tau e^{L\tau}}{L \varepsilon} ]$, then we have $\|x(t_k)-x_k\| \le \varepsilon$, which finishes the proof.
\end{proof}

\section{Proofs of lemmas and propositions}

\subsection{Proof of Lemma \ref{th:composition_approximation}}

\textit{\textbf{Lemma \ref{th:composition_approximation}.}
    Let map $T = F_1 \comp ... \comp F_n$ be a composition of $n$ continuous functions $F_i$ defined on an open domain $D_i$, and let $\mathcal{F}$ be a continuous function class that can uniformly approximate each $F_i$ on any compact domain $\mathcal{K}_i \subset D_i$. Then, for any compact domain $\mathcal{K} \subset D_1$ and $\varepsilon >0$, there are $n$ functions $\tilde F_1, ..., \tilde F_n$ in $\mathcal{F}$ such that
    \begin{align}
            \|T(x) - \tilde F_1 \comp ... \comp \tilde F_n (x)\|
            \le \varepsilon,
            \quad
            \forall x \in \mathcal{K}.
    \end{align}
}

\begin{proof}
It is enough to prove the case of $n=2$. (The case of $n>2$ can be proven by the method of induction, as $T$ can be expressed as the composition of two functions, $T = F_n \circ T_{n-1}$, with $T_{n-1} = F_{n-1} \circ ... \circ F_1$.) According to the definition, we have $F_1(D_1) \subset D_2$. Since $D_2$ is open and $F_1(\mathcal{K})$ is compact, we can choose a compact set $\mathcal{K}_2 \subset D_2$ such that $\mathcal{K}_2 \supset \{F_1(x) + \delta_0  y: x\in \mathcal{K}, \|y\|<1 \} $ for some $\delta_0>0$ that is sufficiently small.

According to the continuity of $F_2$, there is a $\delta \in (0,\delta_0)$ such that
\begin{align*}
        \|F_2(y) - F_2(y')\| &\le \varepsilon/2, \forall y,y' \in \mathcal{K}_2,
    \end{align*}
provided $\|y-y'\| \le \delta$.
The approximation property of $\mathcal{F}$ allows us to choose $\tilde F_1, \tilde F_2 \in \mathcal{F}$ such that
\begin{align*}
        \|\tilde F_1(x) - F_1(x)\| &\le \delta < \delta_0, \quad \forall x \in \mathcal{K}, \\
        \|\tilde F_2(y) - F_2(y)\| &\le \varepsilon/2, \quad \forall y \in \mathcal{K}_2.
    \end{align*}
As a consequence, for any $x \in \mathcal{K}$, we have $F_1(x), \tilde F_1(x) \in \mathcal{K}_2$ and
\begin{align*}
        \|F_2 \circ F_1(x) - \tilde F_2 \circ \tilde F_1(x)\|
        &\le
        \|F_2 \circ F_1(x) - F_2 \circ \tilde F_1(x)\|
        +
        \|F_2 \circ \tilde F_1(x) - \tilde F_2 \circ \tilde F_1(x)\|\\
        &\le
        \varepsilon/2 + \varepsilon/2 = \varepsilon.
    \end{align*}
\end{proof}

\subsection{Proof of Proposition \ref{th:prop_linear}}

\textit{\textbf{Proposition \ref{th:prop_linear}.}
(1) Let $Q$ be a nonsingular matrix. If $x\to W x$ is an affine flow map then the map $ x\to Q W Q^{-1} x$, $ x\to W^T x$ and $x \to W^{-1}x$ also are.
(2) Let $U$ be an upper triangular matrix below with $\lambda>0,$ then the map $x\to Ux$ is an affine flow map for arbitrary vector $w_{2:d}$,
\begin{align}
    U = \left(
    \begin{matrix} 
        \lambda & w_{2:d}\\ 
        0 & I_{d-1}
    \end{matrix}
    \right). \quad
\end{align}
}

\begin{proof}
    (1) It is because $\ln(Q W Q^{-1}) = Q \ln(W) Q^{-1}$, $\ln(W^T) = \ln(W)^T$ and $\ln(W^{-1}) = -\ln(W)$ are real as $\ln(W)$ is real. 
    (2) It can be obtained by employing the formula,
\begin{align}
    \ln
        \left(
    \begin{matrix} 
        \lambda & w_{2:d}\\ 
        0 & I_{d-1}
    \end{matrix}
    \right)
    =
    \left(
    \begin{matrix} 
        \ln(\lambda) & \frac{\ln(\lambda)}{\lambda-1} w_{2:d}\\ 
        0 & 0
    \end{matrix}
    \right), \quad \lambda \neq 1.
\end{align}
When $\lambda=1$, the formula is simplified as $\ln(U) = U - I_d$. 

\end{proof}

\subsection{Proof of Proposition \ref{th:prop_leaky_relu}}

\textit{\textbf{Proposition \ref{th:prop_leaky_relu}.}
    If $\boldsymbol{\tilde \alpha},\boldsymbol{\tilde \beta} \in (0,\infty)^d$, then the map $\Sigma_{\boldsymbol{\tilde \alpha},\boldsymbol{\tilde \beta}}$ is a leaky-ReLU flow map. 
}

\begin{proof}
    By directly calculate the flow map  $\phi^\tau_{\Sigma_{\boldsymbol{\alpha},\boldsymbol{\beta}}(x)}$ with  $\boldsymbol{ \alpha},\boldsymbol{ \beta} \in \mathbb{R}^d$, we have
\begin{align}
    \phi^\tau_{\Sigma_{\boldsymbol{\alpha},\boldsymbol{\beta}}(x)} (x) = 
    \Sigma_{\boldsymbol{\tilde \alpha},\boldsymbol{\tilde \beta}} (x),
\end{align}
where $\boldsymbol{\tilde \alpha} =  (e^{\tau\alpha_1}, ..., e^{\tau\alpha_d})$ and $\boldsymbol{\tilde \beta} =  (e^{\tau\beta_1}, ..., e^{\tau\beta_d})$. 
Choosing $\alpha_i = \ln(\tilde \alpha_i), \beta_i = \ln(\tilde \beta_i)$ and $\tau = 1$, we can finish the proof.
\end{proof}

\subsection{Proof of Lemma \ref{th:comp_split}}

\textit{\textbf{Theorem \ref{th:Lie}. (Lie product formula)}
    For all matrix $A,B \in \mathbb{R}^{d\times d}$, we have 
    \begin{align}
        \text{e}^{A+B} = \lim_{n\to\infty} \Big( e^{A/n} e^{B/n} \Big)^n 
        = \lim_{n\to\infty} \Big( \phi^{1/n}_{Ax} \comp \phi^{1/n}_{Bx} \Big)^{\comp n}
    \end{align}
}

\begin{proof}
    The proof can be found in \cite{Hall2015Matrix}.
\end{proof}

\textit{\textbf{Lemma \ref{th:comp_split}.}
    Let $v_i: \mathbb{R}^d \to \mathbb{R}^d, i=1,2,...,m$ be Lipschitz continuous funcitons, $v = \sum_{i=1}^m v_i$, $\Omega$ be a compact domain. For any $t>0$ and $\varepsilon>0$, there is a positive integers $n$, such that the flow map $\phi^t_v$ can be approximated by composition of flow maps $\phi^{t/n}_{v_i}$  , \emph{i.e.}
    \begin{align}
        \| \phi_{v}^{t}(x)
        -
        \big(\phi_{v_1}^{t/n} \comp \phi_{v_2}^{t /n} \comp ... \comp \phi_{v_m}^{t /n} \big)^{\comp n}
        (x)\|
        < \varepsilon, \quad \forall x \in \Omega.
    \end{align}
}

\begin{proof}
    It's a special case of Lemma~\ref{th:split_approach} with a velocity field $v_i$ independent on $t$.
\end{proof}

\subsection{Proof of Lemma \ref{th:comp_Kronecker}}

\textit{\textbf{Theorem \ref{th:Kronecker}. (Kronecker's approximation theorem)}
    Let $\gamma \in \mathbb{R}$ be an irrational number, then for any $t\in \mathbb{R}$ and $\varepsilon>0$, there exist two integers $p$ and $q$ with $q>0$, such that $|\gamma q + p -t| < \varepsilon$. 
}
\begin{proof}
    The proof can be found in \cite{Apostol1990Kronecker}.
\end{proof}

\textit{\textbf{Lemma \ref{th:comp_Kronecker}.}
    Let $v: \mathbb{R}^d \to \mathbb{R}^d$ be a Lipschitz continuous function, $\Omega$ be a compact domain. For any $t>0$ and $\varepsilon>0$, there exist two positive integers $p$ and $q$, such that the flow map $\phi^t_v$ can be approximated by $(\phi_{v}^{1})^{\comp p} \comp (\phi_{-v}^{\sqrt{2}})^{\comp q}$, \emph{i.e.}
    \begin{align}
        \| \phi_{v}^{t}(x)
        -
        (\phi_{v}^{1})^{\comp p} \comp (\phi_{-v}^{\sqrt{2}})^{\comp q}(x)\|
        < \varepsilon, \quad \forall x \in \Omega.
    \end{align}
}
\begin{proof}
    Since the field $v$ is Lipschitz and the domain $\Omega$ is compact, there exist a constant $C>0$ such that
    \begin{align}
        \|\phi_v^{t_2}(x_0) - \phi_v^{t_1}(x_0)\| \le \int_{t_1}^{t_2} \|v(x(t))\| dt < C |t_2 - t_1|, 
        \quad \forall x_0 \in \Omega.
    \end{align}

    Employing the Kronecker's Theorem \ref{th:Kronecker} with $\gamma = -\sqrt{2}$, approximating $t$ by $p-q\sqrt{2}$ such that
    \begin{align}
        |p-q\sqrt{2} - t| < \varepsilon/C,
    \end{align}
    then we have
    \begin{align}
        \|\phi_v^t(x) - \phi_v^{p-q\sqrt{2}}(x)\| < \varepsilon, \quad  \forall x \in \Omega.
    \end{align} 
    As $t$ is positive, we have $p$ is positive as $q$ is. The following representation of the flow maps finishes the proof,
\begin{align}
    \phi_v^{p-q\sqrt{2}} = \phi_v^{p} \comp \phi_v^{-q\sqrt{2}} = \phi_v^{p} \comp \phi_{-v}^{q\sqrt{2}}
    =(\phi_{v}^{1})^{\comp p} \comp (\phi_{-v}^{\sqrt{2}})^{\comp q}.
\end{align}

\end{proof}

\textit{\textbf{Corollary \ref{th:H_to_V}.}
    For any flow maps $h$ in $H_1 \cup H_2$, $\varepsilon>0$ and compact domain $\Omega\subset \mathbb{R}^d$, there is a sequence $\phi_1, \phi_2, ..., \phi_m$ in $V$ (Eq.~\ref{eq:V}) such that 
    \begin{align}
        \| h(x)
        -
        (\phi_{1} \comp \phi_2 \dots \comp \phi_m)(x)\|
        < \varepsilon, \quad \forall x \in \Omega.
    \end{align}
}

\begin{proof}
    The proof is finished by directly employing Lemma~\ref{th:comp_split} and Lemma~\ref{th:comp_Kronecker} with the following splittings,
\begin{align}
    Ax + b &= 
    \sum_{i=1}^{d} 
    \sum_{j=1}^{d} 
    a_{ij} E_{ij} x + \sum_{i=1}^d b_i e_i,
    \quad
    \Sigma_{\boldsymbol{ \alpha},\boldsymbol{ \beta}}(x)
    =
    \sum_{i=1}^d \alpha_i \Sigma_{e_i,0}(x)
    + \sum_{i=1}^d \beta_i \Sigma_{0,e_i}(x).
\end{align}
\end{proof}

\subsection{Proof of Lemma \ref{th:T_to_H} }

\textit{\textbf{Lemma \ref{th:T_to_H}.}
Let $\alpha>0$ and $\max(1/\alpha,\alpha)|a w_d| < 1$, then the map $T$ in Eq.~\eqref{eq:map_T} is a composition of at most six flow maps in $H_1 \cup H_2$.
}

\begin{proof}
    Recall the map $T:x\to y$ is of the form
\begin{align}
    T: \left\{
    \begin{aligned} 
    & y^{(i)} = x^{(i)} ,  i = 1,\cdots, d-1,\\
    & y^{(d)} = x^{(d)} + a  \sigma(w_{1} x^{(1)}+ \cdots +w_{d} x^{(d)} + b).
        \end{aligned}
    \right.
\end{align}
where $\sigma=\sigma_\alpha$ is the leaky-ReLU funciton, $a, b, w_1, ..., w_d \in \mathbb{R}$ are parameters. We construct the composition flow maps in three cases.

(1) The case of $w_1=...=w_{d}= 0$. In this case, $T$ is already an affine flow map in $H_1$.

(2) The case of $w_1=...=w_{d-1}= 0, w_d \neq 0$. In this case, we only need to consider the last coordinate as the first $d-1$ coordinates are kept. According to
\begin{align}
    y^{(d)} = x^{(d)} + a  \sigma_{\alpha}(w_{d} x^{(d)} + b)
    =
    (x^{(d)}+ \tfrac{b}{w_d}) + a  \sigma_{\alpha}(w_{d} (x^{(d)} + \tfrac{b}{w_d})) -  \tfrac{b}{w_d},
\end{align}
we can assume $b=0$ as it can be absorbed in an affine flow map. Let $\tilde \alpha = 1 + \alpha a w_d >0, \tilde \beta = 1 + a w_d >0$, as $\max(1/\alpha,\alpha)|a w_d| < 1$, we have the following representation,
\begin{align}
    x^{(d)} + a  \sigma_{\alpha}(w_{d} x^{(d)}) = 
    \begin{cases}
        \sigma_{\tilde \alpha, \tilde \beta} (x^{(d)}), & w_d <0,\\
        \sigma_{\tilde \beta, \tilde \alpha} (x^{(d)}), & w_d >0,
    \end{cases}
\end{align}
which is a leaky-ReLU flow map in $H_3$ either $w_d>0$ or $w_d<0$.

(3) The case of $w_i \neq 0$ for some $i=1,...,d-1$. We only show the case of $w_1 \neq 0$ without loss of generality. Same with (1), we can absorb $b$ in $x^{(1)}$ using an affine flow map; hence we only need to consider the case of $b=0$. In addition, using the property of leaky-ReLU, 
\begin{align}\label{eq:prop_leaky_ReLU}
    \sigma_\alpha(x) = - \alpha \sigma_{1/\alpha}(-x)
\end{align}
$\sigma_\alpha(x) = - \alpha \sigma_{1/\alpha}(-x)$, we can further assume $w_1>0$. (If $w_1<0$, we change $w$ to $-w$, $\alpha$ to $1/\alpha$, $a$ to $a \alpha$, which does not change the map $T$). As a result, the map $T$ can be represented by the following composition,
\begin{align}
    T(x) = F_0 \comp F_1 \comp \cdots \comp F_5(x),
\end{align}
where each composition step is as follows,
\begin{align*} 
    \left(
        \begin{matrix} x^{(1)} \\ x^{(2:d-1)} \\ x^{(d)} 
        \end{matrix}\right)
    &\underrightarrow{F_0}
    \left(
        \begin{matrix} \nu \\ x^{(2:d-1)}\\ x^{(d)} 
        \end{matrix}\right)   
    \underrightarrow{F_1}
    \left(
        \begin{matrix} \sigma{(\nu)} \\ x^{(2:d-1)}\\ x^{(d)}
        \end{matrix}\right)
    \underrightarrow{F_2}
    \left(
        \begin{matrix} \sigma{(\nu)} \\ x^{(2:d-1)}\\ x^{(d)}+ a \sigma{(\nu)} 
        \end{matrix}\right)
    \underrightarrow{F_3}
    \left(
        \begin{matrix} \nu \\ x^{(2:d-1)}\\ x^{(d)} + a \sigma{(\nu)} 
        \end{matrix}\right)
    \\&\underrightarrow{F_4}
    \left(
        \begin{matrix} \nu+w_{d} a \sigma{(\nu)} \\ x^{(2:d-1)}\\ x^{(d)}+ a \sigma{(\nu)} 
        \end{matrix}\right)
    \underrightarrow{F_5}
    \left(
        \begin{matrix} x^{(1)} \\ x^{(2:d-1)}\\ x^{(d)}+ a \sigma{(\nu)} 
        \end{matrix}\right).
\end{align*}
Here, $\nu:=w_{1}x^{(1)}+\cdots+w_{d}x^{(d)}$ and $x^{(2:d-1)}$ represent the elements $x^{(2)},...,x^{(d-1)}$.
We clarify that each component $F_i,i=0,\cdots,5,$ are flow maps in $H_1\cup H_2$. 

In fact, $F_0,F_2,F_5=F^{-1}_0$ are affine transformations,
\begin{align*}
    F_0(x) = \left(
        \begin{matrix} 
            w_{1} &w_{2:d}\\ 
            0 & I_{d-1}
        \end{matrix}
        \right) x, \quad
    F_2(x) = \left(
        \begin{matrix} 
            I_{d-1} &0\\ 
            (a,0_{2:d-1}) & 1
        \end{matrix}
        \right) x, \quad
    F_5(x) = \left(
        \begin{matrix} 
            1/w_1 &-w_{2:d}/w_1\\ 
            0 & I_{d-1}
        \end{matrix}
        \right) x,
\end{align*}
where $I_{d-1}$ is the identity matrix, $(a,0_{2:d-1})=(a,0,...,0)$ with $d-2$ zeros. According to Proposition \ref{th:prop_linear}, they are flow maps in $H_1$. 
In addition, $F_1, F_3$ and $F_4$ are leaky-ReLU flow maps in $H_2$ as 
\begin{align}
    F_1 = \Sigma_{(\alpha,1_{2:d}),1_{1:d}}, \quad
    F_3 = \Sigma_{(1/\alpha,1_{2:d}),1_{1:d}}, \quad
    F_4= \Sigma_{(1+w_d a \alpha,1_{2:d}), (1+w_d a,1_{2:d}) }.
\end{align}
Here, the condition $\max(1/\alpha,\alpha)|a w_d| < 1$ is used to ensure $1+w_d a \alpha >0$ and $1+w_d a>0$, no matter whether Eq.~\eqref{eq:prop_leaky_ReLU} is uesd.

\end{proof}

\section{Proof of the main theorems}

\subsection{Proof of Theorem \ref{th:OP_to_comp_Tijk}}

\textit{\textbf{Theorem \ref{th:OP_to_comp_Tijk}.}
Let $\Psi \in \text{Diff}_0(\Omega)$ be an orientation preserving diffeomorphism, $\Omega$ be a compact domain $\Omega \subset \mathbb{R}^d$. Then, for any $\varepsilon>0$, there is a sequence of transformations, $T_k^{(i,j)}$, is of the form Eq.~\eqref{eq:map_T_ijk} such that 
\begin{align}
    \|\Psi(x)-(T_1^{(1,1)} \comp T_1^{(1,2)} \comp \dots \comp T_1^{(N,d)}
    \comp \dots \comp
    T_n^{(1,1)} \comp T_n^{(1,2)} \comp \dots \comp  T_n^{(N,d)})(x)\| \le \varepsilon, \quad
    \forall x \in \Omega.
    \nonumber
\end{align}
}

\begin{proof}

(1) Firstly, employed results of \citet{Agrachev2010Dynamics} and \citet{Caponigro2011Orientation}, any OP diffeomorphism $\Psi$ can be approximated by flow map of ODEs. Particularly, we can choose the ODEs as neural ODEs are of the form 
\begin{align} 
    x'(t) = 
    v(x(t),t) = 
    \sum\limits_{i=1}^N s_i(t)\sigma(w_i(t) \cdot x(t)+b_i(t)),
\end{align}
where the field function $v$ is a neural network with $N$ hidden neurons, the activation is chosen as the leaky-ReLU function $\sigma=\sigma_\alpha$ for some $\alpha \in (0,1)$, $s_i \in \mathbb{R}^d$, $w_i\in \mathbb{R}^d$ and $b_i\in \mathbb{R}$ are piecewise constant functions of $t$. The universal approximation property of neural networks \cite{Cybenko1989Approximation} implies that, for any $\varepsilon>0$, there exist $s_i \in \mathbb{R}^d$, $w_i\in \mathbb{R}^d, b_i \in \mathbb{R}, \tau >0$ and $N \in \mathbb{Z}_+$, such that 
\begin{align}
   \|\Psi(x) - \phi_v^\tau(x)\| < \varepsilon/2, \quad \forall x \in \Omega,
\end{align}
where $\phi^\tau_{v}$ is the flow map of Eq.~\eqref{eq:NODE_v}.

(2) Following the approach of \cite{Duan2022Vanilla}, we employ a proper splitting numerical scheme to discretize the neural ODE (\ref{eq:NODE_v}). Split the field $v$ as a summation of $Nd$ functions, $ v(x,t) = \sum_{i=1}^N \sum_{j=1}^d v_{ij}(x,t) {e}_j$, where ${e}_j$ is the $j$-th axis unit vector and
$ v_{ij}(x,t) = s_{ij}(t) \sigma(w_i(t) \cdot x+b_i(t))$
are scalar Lipschitz functions.
Then Lemma~\ref{th:split_approach} implies that there is a $n\in\mathbb{Z}_+$ big enough such that 
\begin{align}
    \|\phi_v^\tau(x) - \Phi(x)\| < \varepsilon/2, \quad \forall x \in \Omega,
 \end{align}
where
\begin{align*}
    \Phi &= T_1 \comp T_2 \comp \cdots \comp T_n \\
    &\equiv
    (T_1^{(1,1)} \comp T_1^{(1,2)} \comp \dots \comp T_1^{(N,d)})
    \comp
    (T_2^{(1,1)} \comp T_2^{(1,2)} \comp \dots \comp T_2^{(N,d)})
    \comp \dots \comp
    (T_n^{(1,1)} \comp T_n^{(1,2)} \comp \dots  \comp T_n^{(N,d)}),
\end{align*}
and the map $T_k^{(i,j)}: x \to y$ is of the form
\begin{align}
    \left\{
    \begin{aligned} 
    & y^{(l)} = x^{(l)} , l \neq j,  \\
    & y^{(j)} = x^{(j)} + \Delta t v_{ij}(x,k \Delta t).
    \end{aligned}
    \right.
\end{align}
Here, the superscript in $x^{(l)}$ indicates the $l$-th coordinate of $x$. 

(3) Combining the above two parts, we finish the proof.
\end{proof}

\subsection{Proof of Theorem \ref{th:OP_to_comp_H}}

\textit{\textbf{Theorem \ref{th:OP_to_comp_H}.}
Let $\Psi \in \text{Diff}_0(\Omega)$ be an orientation preserving diffeomorphism, $\Omega$ be a compact domain $\Omega \subset \mathbb{R}^d$. Then, for any $\varepsilon>0$, there is a sequence of flow maps, $h_1, h_2, ...,h_m, m \in \mathbb{Z}_+$, in $H = H_1 \cup H_2$ such that
\begin{align}
    \|\Psi(x)-(h_1\comp h_2\comp ...\comp h_m)(x)\| \le \varepsilon, \quad
    \forall x \in \Omega.
\end{align}
}

\begin{proof}
According to Theorem \ref{th:OP_to_comp_Tijk}, there is a sequence of transformations, $T_k^{(i,j)}$, is of the form Eq.~\eqref{eq:map_T_ijk} such that 
\begin{align}
    \|\Psi(x)-(T_1^{(1,1)} \comp T_1^{(1,2)} \comp \dots \comp T_1^{(N,d)}
    \comp \dots \comp
    T_n^{(1,1)} \comp T_n^{(1,2)} \comp \dots \comp  T_n^{(N,d)})(x)\| \le \varepsilon, \quad
    \forall x \in \Omega.
    \nonumber
\end{align}
Here $n$ can be choosed large enough such that $\max(1/\alpha,\alpha) C^2 \Delta t < 1, \Delta t = \tau/n$, where 
\begin{align}
    C = \max_{t \in [0,\tau]} \big\{|s_{ij}(t)|, |w_{ij}(t)| ~|~ i,j=1,2,...,d \big\}.
\end{align}
Since $s_i, w_i$ are piecewise constant functions, the constant $C$ is finite. Then according to Lemma \ref{th:T_to_H}, each $T_k^{(i,j)}$ is a composition of at most six flow maps in $H_1 \cup H_2$. As a consequence, we finish the proof by relabelling the index of the used flow maps.

\end{proof}

\subsection{Proof of Theorem \ref{th:main_V}}

\textit{\textbf{Theorem \ref{th:main_V}.}
    Let $\Psi \in \text{Diff}_0(\Omega)$ be an orientation preserving diffeomorphism, $\Omega$ be a compact domain $\Omega \subset \mathbb{R}^d$. Then, for any $\varepsilon>0$, there is a sequence of flow maps, $\phi_1, \phi_2, ...,\phi_n \in V, n \in \mathbb{Z}_+$, such that
    \begin{align}
        \|\Psi(x)-(\phi_1\comp \phi_2\comp ...\comp\phi_n)(x)\| \le \varepsilon, \quad
        \forall x \in \Omega.
    \end{align}
}

\begin{proof}

(1) According to Theorem \ref{th:OP_to_comp_H}, there is a sequence of flow maps, $h_1, h_2, ...,h_m, m \in \mathbb{Z}_+$, in $H = H_1 \cup H_2$ such that
\begin{align}
    \|\Psi(x)-(h_1\comp h_2\comp ...\comp h_m)(x)\| \le \varepsilon/2, \quad
    \forall x \in \Omega.
\end{align}

(2) According to Corollary~\ref{th:H_to_V}, each $h_i$ can be universal approximation by $\mathcal{H}_V$, \emph{i.e.}, for any $\varepsilon_i>0$ and compact domain $\Omega_i$, there is a sequence of flow maps, $\phi_{i,1},..., \phi_{i,n_i} \in V$, such that  
\begin{align}
    \|h_i(x)- (\phi_{i,1} \comp \phi_{i,2} \comp ...\comp \phi_{i,n_i})(x)\| \le \varepsilon_i, \quad
    \forall x \in \Omega_i.
\end{align}

(3) According to Lemma~\ref{th:composition_approximation}, we can choose $\phi_{i,j} \in V$ and reindex them as $\phi_1,\phi_2,...,.\phi_n$ such that 
\begin{align}
    \|(h_1\comp h_2\comp ...\comp h_m)(x)-(\phi_1\comp \phi_2\comp ...\comp\phi_n)(x)\| \le \varepsilon/2, \quad
    \forall x \in \Omega.
\end{align}

(4) Combining (1) and (3), we finish the proof.

\end{proof}

\subsection{Proof of Theorem \ref{th:main-OP}}

\textit{\textbf{Theorem \ref{th:main-OP}.}
Let $\Omega \subset \mathbb{R}^d$ be a compact domain. Then, there is a finite set $V \subset \overline{\text{Diff}_0(\mathbb{R}^d)}$ such that the hypothesis space $\mathcal{H}_V$ in Eq.~\eqref{eq:H_v} has $C$-UAP for $\text{Diff}_0(\mathbb{R}^d)$.
}

\begin{proof}
    The Theorem \ref{th:main_V} provides constructive proof for the existence of $V$ in Eq.~\eqref{eq:V}.
\end{proof}

\textit{\textbf{Corollary \ref{th:main-C}.}
    Let $\Omega \subset \mathbb{R}^d$ be a compact domain, $d\ge 2$ and $p \in [1,+\infty)$. Then, there is a finite set $V \subset C(\mathbb{R}^d,\mathbb{R}^{d})$ such that the hypothesis space $\mathcal{H}_V$ in Eq.~\eqref{eq:H_v} has $L^p$-UAP for $C(\mathbb{R}^d,\mathbb{R}^{d})$.
}

\begin{proof}
    We can use the same $V$ in Theorem \ref{th:main-OP} as $V \subset \text{Diff}_0(\mathbb{R}^d) \subset C(\mathbb{R}^d,\mathbb{R}^{d})$.

    (1) Let $f \in C(\mathbb{R}^d,\mathbb{R}^{d}), d\ge 2$, then the result of \cite{Brenier2003Approximation} indicates that for any $\varepsilon >0$, there is a OP diffeomorphism $\Psi \in \text{Diff}_0(\mathbb{R}^d)$ such that
    \begin{align}
        \|f - \Psi\|_{L^p(\Omega)} < \varepsilon/2.
    \end{align}
    
    (2) The Theorem \ref{th:main-OP} indicates that, there is mapping $\Phi \in \mathcal{H}_V$ such that 
    \begin{align}
        \|\Psi(x) - \Phi(x)\| < \varepsilon' = \tfrac{\varepsilon}{2|\Omega|}, \quad \forall x \in \Omega.
    \end{align}

    (3) Combining (1) and (2), we have 
    \begin{align}
        \|f - \Phi\|_{L^p(\Omega)} < \varepsilon.
    \end{align}
    which finishes the proof.
\end{proof}

\section{Vocabulary for linear spaces}
\label{sec:V_linear}

Here we provide similar results for both the vector space and the linear mapping space. Note that linear mappings can be characterized as matrics and the construction here is much simpler than what we do in the main body of this paper for the continuous function space.

\begin{theorem}\label{th:Rd}
    There is a finite set $V_0 \subset \mathbb{R}^d$, such that for any vector $v^* \in \mathbb{R}^d$ and $\varepsilon>0$, there is a sequence, $v_{i_1}, v_{i_2},..., v_{i_n},$ in $V_0$, $n\in \mathbb{Z}_+$, such that 
    $$\| v_{i_1} + v_{i_2}+...+ v_{i_n} - v^*\| < \varepsilon.$$
\end{theorem}
\begin{proof}
    Directly employing Kronecker's Theorem \ref{th:Kronecker}, it is easy to see the following set satisfies the requirement,
    \begin{align}
        V_0 = \{ \lambda e_i | \lambda \in \{\pm1, \pm \sqrt{2}\}, i=1,2,...,d \},
    \end{align}
    where $e_i$ is the axis vector in the $i$-th coordinate.
\end{proof}

\begin{lemma}\label{th:R}
    Let $V_1 = \{0, \pm 1, 10^{\pm 1}, 10^{\pm \sqrt{2}}\}$, then for any number $\lambda \in \mathbb{R}$ and $\varepsilon>0$, there is a sequence, $v_{i_1}, v_{i_2},..., v_{i_n},$ in $V_1$, $n\in \mathbb{Z}_+$, such that 
    $$| v_{i_1} v_{i_2}... v_{i_n} - \lambda | < \varepsilon.$$
\end{lemma}
\begin{proof}
    It is enough to consider the case of $\lambda>0$. According to Theorem \ref{th:Rd} with $d=1$, we can finish the proof by approximating $v^*=\log_{10}{(\lambda)}$.
\end{proof}

\begin{theorem}
    There is a finite set $V_2 \subset \mathbb{R}^{d\times d}$, such that for any matrix $A^* \in \mathbb{R}^{d\times d}$ and $\varepsilon>0$, there is a sequence, $A_{i_1}, A_{i_2},..., A_{i_n},$ in $V_2$, $n\in \mathbb{Z}_+$, such that 
    $$\| A_{i_1} A_{i_2}... A_{i_n} - A^*\| < \varepsilon.$$
\end{theorem}
\begin{proof}
    For simplicity, we only consider the case of $d=2$ as the general cases can be proved in the same way. Since any singular matrix can be approximated by nonsingular matrixes, we only need to consider $A^*$ as a nonsingular matrix. In addition, every nonsingular matrix can be represented as a product of elementary matrices. Hence we can further assume $A^*$ to be an elementary matrix. Note that the elementary matrices are of the following,
    \begin{align*}
        \left(
        \begin{matrix} 
            \lambda &0\\ 
            0 & 1
        \end{matrix}
        \right),
        \left(
        \begin{matrix} 
            1 &0\\ 
            \lambda & 1
        \end{matrix}
        \right),
        \left(
        \begin{matrix} 
            0 &1\\ 
            1 & 0
        \end{matrix}
        \right), \lambda \neq 0.
    \end{align*}
    Therefore, we can finish the proof by considering the following set $V_2$,
    \begin{align}
        V_2 = \Big\{ 
            \left(
        \begin{matrix} 
            \lambda &0\\ 
            0 & 1
        \end{matrix}
        \right),
        \left(
        \begin{matrix} 
            1 &0\\ 
            1 & 1
        \end{matrix}
        \right),
        \left(
        \begin{matrix} 
            0 &1\\ 
            1 & 0
        \end{matrix}
        \right)
        \Big| \lambda \in \{\pm1, 10^{\pm 1}, 10^{\pm\sqrt{2}}\}\Big \}.
    \end{align}
    The validation of this $V_2$ can be verified by using Lemma~\ref{th:R} and the following relations,
    \begin{align*}
        \left(
        \begin{matrix} 
            1 &0\\ 
            \lambda & 1
        \end{matrix}
        \right)
        =
        \left(
        \begin{matrix} 
            1/\lambda &0\\ 
            0 & 1
        \end{matrix}
        \right)
        \left(
        \begin{matrix} 
            1 &0\\ 
            1 & 1
        \end{matrix}
        \right)
        \left(
        \begin{matrix} 
            \lambda &0\\ 
            0 & 1
        \end{matrix}
        \right)
        , \lambda \neq 0,
        \\
        \left(
        \begin{matrix} 
            \lambda_1 &0\\ 
            0 & 1
        \end{matrix}
        \right)
        \left(
        \begin{matrix} 
            \lambda_2 &0\\ 
            0 & 1
        \end{matrix}
        \right)
        =
        \left(
        \begin{matrix} 
            \lambda_1\lambda_2 &0\\ 
            0 & 1
        \end{matrix}
        \right)
        , \lambda_1, \lambda \in \mathbb{R}.
    \end{align*}

\end{proof}

\section{Proof of Theorem \ref{th:L_flowable}}

\textit{\textbf{Theorem \ref{th:L_flowable}.}
Let $\Omega=[0,1]^d, d\ge2,$ $p\in[1,\infty)$ be fixed. For any DFA $M$, there is a flow grammar $\mathcal{M}$ such that $L(M) = L(\mathcal{M})$.
}

\begin{proof}
    For any DFA $M = (Q, \Sigma, \delta, q_0, F)$ with $Q = \{q_0, . . . , q_{n-1}\}$, we will construct the flow grammar $\mathcal{M}$. Let's begin with the case that $F$ contains only one accept state. For this case, the construction is as follows.

    (1) Choosing a small number $\gamma \in (0,1/4)$ and $n$ different points $r_0, r_1, \cdots, r_{n-1}$ in $(2\gamma,1-2\gamma)^d$ such that $\|r_i - r_j\|_{1} > 3 \gamma$ for $i\neq j$. For each point $r_i$, define its correspoding cubic as $\Omega_i = \{r_i + x ~|~ \|x\|_1 < \gamma\}$. It is obvious that all cubes $\Omega_i$ are disjoint, and the $L^1$ distance between each pair of cubes is greater than $\gamma$.

    (2) Construct the embedding mappings. For each $s \in \Sigma$, we define its embedding mapping $\left\langle s \right\rangle$ as the following,
    \begin{align}
        \label{eq:embedding_s}
        \left\langle s \right\rangle: x \mapsto 
        \begin{cases*}
            x - r_i + r_j,  \quad x \in \Omega_i, q_j = \delta(q_i, s), \\
            \Delta_s(x), \quad x \notin \cup_{i=1}^n \Omega_i,
        \end{cases*}
    \end{align}
    where $\Delta_s$ is a funciton which makes $\left\langle s \right\rangle$ as a continuous mapping in $C(\Omega,\Omega)$. In other words, we extend the domain of definition from $\cup_{i=1}^n \Omega_i$ to the whole domain $\Omega$. This is guaranteed by the well-known Tietze extension theorem.

    (3) Construct the acceptance condition. Suppose the set of accept state is $F=\{q_{k}\}$, then the acceptance condition $A=(g(\cdot),\rho(\cdot),\epsilon)$ for $\mathcal{M}$ is designed as the following. Let $g(\cdot) = r_k$ be a constant function, $\rho(x) = \text{ReLU}(\gamma - \|x-r_0\|_1)$ be a piecewise linear function which vanishes outside $\Omega_0$, and $\epsilon = \min_{j \neq k} \frac{I_k + I_j}{2}$ where $I(j)$ is defined as the following,
    \begin{align}
        I_j := 
        \int_{\Omega} \rho(x) \|g(x) - (x - r_0 + r_j)\|^p dx
        =
        \int_{\Omega_0} \rho(x) \|r_k - (x - r_0 + r_j)\|^p dx, 
        \quad j = 0,1,\cdots, n-1.
    \end{align}
    Note that $I_j > I_k >0$ for any $j \neq k$ and hence the $\epsilon$ above is well defined. 

    (4) Verify $L(M) = L(\mathcal{M})$. The definition of the embedding in Eq.~\eqref{eq:embedding_s} indicates that $\left\langle s \right\rangle$ moves cubic $\Omega_i$ to $\Omega_j$ when $q_j = \delta(q_i, s)$. As a consequence, for a string $w = s_1 s_2\cdots s_m$ in $\Sigma^*$, we have $\left\langle w \right\rangle = 
    \left\langle s_1 \right\rangle
    \comp\cdots
    \comp
    \left\langle s_m \right\rangle$
    moves $\Omega_i$ to $\Omega_{j'}$ where $j'=i_m$ is the index satisfing $q_{i_0} = q_i$, $q_{i_{j+1}} = \delta(q_{i_j}, s_{j+1})$ for $j = 0, \cdots m - 1$. If $w$ is accept by $M$, \emph{i.e.} $w \in L(M)$, $i_0=0$ and $q_{i_m}=q_k$, then $\left\langle w \right\rangle$ moves $\Omega_0$ to $\Omega_k$. Consequently, we have
    \begin{align*}
        I(w) = 
        \int_{\Omega} \rho(x)
        \big\|
            g(x) - 
        \left\langle s_1 \right\rangle
        \comp\cdots
        \comp
        \left\langle s_m \right\rangle (x)
        \big\|^p  dx
        =
        I_{i_m}=
        I_k
        < \epsilon,
    \end{align*}
    which indicates that $w$ is accepted by $\mathcal{M}$, \emph{i.e.}, $w \in L(\mathcal{M})$. On the other hand, if $w \notin L(M)$, then $q_{i_m} \neq q_k$, $I(w)=I_{i_m} > \frac{I_k+I_{i_m}}{2} \ge \epsilon$ and $w \notin L(\mathcal{M})$. Therefore we finish the verification.

    Now we turn to the case that the set of accept state $F$ contains multiple states. 
    We assume $F=\{q_{k_1},...,q_{k_t}\} \subset Q$ contains $t$ states. For this case, we only need to modify the above construction slightly. In detail, we can replace the location of cubices $\Omega_0,\Omega_1,...,\Omega_n$ to satisfy an additional requirement: the minimal cube containing $\Omega_{k_1} \cup \cdots \cup \Omega_{k_t}$ has a $L^1$ distance larger than $\gamma$ to any other cube $\Omega_j$. This can be done by choosing a smaller number $\gamma$. Denote the minimal cube as $\hat \Omega_{k}$ which is centered at $\hat r_k$.
    The construction of the embedding mappings and the acceptance condition is the same as before except for modifying $g(\cdot)$ and $\epsilon$ to the following ones,
    \begin{align}
        g(\cdot)=\hat r_k 
        ,\quad 
        \epsilon = \frac{1}{2} \Big(\max_{k \in \{k_1,...,k_t\}} I_k + \min_{j \notin \{k_1,...,k_t\}}I_j \Big).
    \end{align}
    Note that the additional requirement on $\Omega_i$ ensures that $I_j > I_k >0$ for any $k \in \{k_1,...,k_t\}$ and $j \notin \{k_1,...,k_t\}$. As a consequence, the equality $L(M) = L(\mathcal{M})$ remains. 
\end{proof}

\newpage
\section{Potential insights for building models for natural language processing}
\label{sec:NLP}

In natural language processing (NLP), it is important to accurately depict the meaning of words and sentences. The well-known word vector embeddings provide a good baseline where words with similar semantics have similar word vectors. However, since static word vectors cannot describe the different semantics of polysemous words and the influence of context, people have developed dynamic word vector models and more complex large language models (LLM) such as BERT and GPT. However, how to interpret the pre-trained language models is difficult.

The implicit conclusion of our theorems is that if the meanings of sentences can be embedded as continuous functions (which is a much larger space than vector space), then we can express these meanings by the composition of a finite vocabulary of functions. This is the compositional flow-space model (CFSM) we proposed in Section \ref{sec:flow_grammar}. In CFSM, the semantics of polysemous words are judged by context, which is encoded in the input and output of the embedded functions.

Training such a CFSM from scratch is tricky and time-consuming. One alternative is to extract the embedded function directly from an LLM such as LLaMa, and then observe to what extent CFSM can restore the LLM's capabilities. Performing such an experiment is beyond the skill set of the authors.

Recently, the Mamba model has been getting a lot of attention. Its basic component is the state space model (SSM), which has a natural correlation with function composition. It's not hard to see that the single-layer (linear) SSM can be regarded as a function embedding of words. The Mamba model stacks multiple SSM layers and nonlinearity. It is worth thinking about the difference and relation between CFSM and the embedding in Mamba.

Of course, we should be wary of the fact that human's natural language is complex. Embedding words as functions is certainly a limited idea, but it's a good generalization compared to embedding words as vectors. 


\end{document}